\documentclass{article}
\usepackage{arxiv}
\usepackage{graphicx}
\usepackage{xspace}
\usepackage{epsfig}
\usepackage{color}
\usepackage{url}
\usepackage{tikz}
\usepackage{float}
\usepackage{booktabs}
\usepackage{makecell}
\usepackage{multicol}
\usepackage{longtable}
\usepackage{array}
\usepackage{anyfontsize}
\usepackage{balance}
\usepackage{algpseudocode}
\usepackage{amsthm}
\usepackage{natbib}

\usepackage[vlined,linesnumbered,ruled,titlenotnumbered]{algorithm2e}
\usepackage{mathtools}
\usepackage{hyperref}
\usepackage{comment}
\usepackage{algpseudocode,booktabs,amsmath}
\makeatletter
\makeatother
\usepackage{multirow, rotating}
\usepackage{mwe}
\usepackage{blindtext}
\usepackage{amssymb}
\usepackage{hyperref}
\usetikzlibrary{shapes,decorations,patterns,positioning}
\usetikzlibrary{plotmarks}
\usepackage{booktabs}
\usepackage{pdflscape}
\usepackage{adjustbox}
\usepackage{capt-of}
\usepackage{changepage}
\usepackage{amsfonts}
\usepackage{fancyhdr}
\usepackage{subfigure}
\usepackage{dsfont}
\usepackage{color}
\usepackage{xcolor}
\usepackage{colortbl}

\newtheorem{lemma}{Lemma}
\newtheorem{theorem}{Theorem}

\newcommand{\ignore}[1]{}

\newcommand{\kopt}{k_{opt}}

\DeclareMathOperator*{\argmax}{arg\,max}

\newcommand{\R}{\mathbb{R}}

\newcommand{\ga}{GA\xspace}
\newcommand{\gga}{GGA\xspace}
\newcommand{\gsemo}{GSEMO\xspace}
\newcommand{\nsga}{NSGA-II\xspace}
\newcommand{\spea}{SPEA2\xspace}

\AtBeginDocument{
  \providecommand\BibTeX{{
    \normalfont B\kern-0.5em{\scshape i\kern-0.25em b}\kern-0.8em\TeX}}}
\usepackage{xcolor}
\usepackage{tcolorbox}

\begin{document}

\title{Optimizing Monotone Chance-Constrained
Submodular Functions Using Evolutionary
Multi-Objective Algorithms}

\author{Aneta Neumann\\
Optimisation and Logistics\\
School of Computer and Mathematical Sciences\\
The University of Adelaide\\
Adelaide, Australia\\
 \And
Frank Neumann\\
Optimisation and Logistics\\
School of Computer and Mathematical Sciences\\
The University of Adelaide\\
Adelaide, Australia\\
}
\maketitle

\begin{abstract}
Many real-world optimization problems can be stated in terms of submodular functions. Furthermore, these real-world problems often involve uncertainties which may lead to the violation of given constraints. 
A lot of evolutionary multi-objective algorithms following the Pareto optimization approach have recently been analyzed and applied to submodular problems with different types of constraints. We present a first runtime analysis of evolutionary multi-objective algorithms based on Pareto optimization for chance-constrained submodular functions. Here the constraint involves stochastic components and the constraint can only be violated with a small probability of $\alpha$. 
We investigate the classical GSEMO algorithm for two different bi-objective formulations using tail bounds to determine the feasibility of solutions. 
We show that the algorithm GSEMO obtains the same worst case performance guarantees for monotone submodular functions as recently analyzed greedy algorithms for the case of uniform IID weights and uniformly distributed weights with the same dispersion when using the appropriate bi-objective formulation. As part of our investigations, we also point out situations where the use of tail bounds in the first bi-objective formulation can prevent GSEMO from obtaining good solutions in the case of uniformly distributed weights with the same dispersion if the objective function is submodular but non-monotone due to a single element impacting monotonicity.
Furthermore, we investigate the behavior of the evolutionary multi-objective algorithms GSEMO, NSGA-II and SPEA2 on different submodular chance-constrained network problems. Our experimental results show that the use of evolutionary multi-objective algorithms leads to significant performance improvements compared to state-of-the-art greedy algorithms for submodular optimization.
\end{abstract}

\section{Introduction}

Artificial-intelligence-based optimization techniques have found various applications in solving complex optimization problems. They provide some of the best approaches for prominent combinatorial optimization problems such as satisfiability~\citep{DBLP:journals/jair/XuHHL08,biere2013lingeling,rintanen2012planning}, the maximum coverage problem~\cite{DBLP:conf/gecco/NeumannA022} and the traveling salesperson problem~\citep{lin1973effective}.
Evolutionary algorithms are artificial-intelligence-based optimization algorithms that mimic the principle of natural evolution to solve optimization and machine learning problems. 
Evolutionary algorithms have been widely applied to solve complex optimization problems. 
In particular, this holds for complex engineering problems where the quality of solutions can often only be determined by simulations~\citep{liu2018survey,sebastian2012flywheel}. While evolutionary algorithms are well suited for solving broad classes of problems and often achieve good results within a reasonable amount of time, understanding their optimization behavior and determining the type of problems for which they can provably achieve high-quality solutions is still a challenging task.
The theory of evolutionary computation aims to explain such good behaviors and also point out the limitations of evolutionary computing techniques. A wide range of tools and techniques have been developed in the last 25 years and we point the reader to~\cite{BookDoeNeu,DBLP:books/daglib/0025643,Auger11,ncs/Jansen13} for comprehensive presentations.

Stochastic components play a crucial role in many real-world applications and uncertain components present a significant risk in terms of safety and/or profitability when dealing with important constraints~\citep{ben2009robust}. Chance constraints allow to model critical uncertainties and requirements based on uncertain situations~\citep{charnes1959chance}. A chance constraint involves random components and in this context it is required that the constraint is violated with a small probability of at most $\alpha$.
In real-world scenarios, it is often more beneficial to guarantee that a function value does not drop below a certain value with a high probability ~\citep{devore2011probability,miller2012probability,DBLP:journals/corr/abs-2102-05235}. For example, in the area of ore mine planning, the profits achieved within different years should not drop below certain values, as this would make the entire operation not viable and lead to bankruptcy for the company~\citep{fioroni2008concurrent,koushavand2014linear}. 
In the area of milling for mining, it is crucial to guarantee that the throughput rate of semi-autogenous grinding (SAG) mills does not drop below a certain value with a good probability which can cause an expensive mine shutdown and significant delays in mining operations~\citep{galan2002robust,zhou2016survey,itavuo2019mass}. In the electrical power industry, which is extremely vulnerable to disruption, it is important to guarantee a balance between supply and demand of fossil and renewable energy sources such as solar, hydro and wind at each time interval to avoid energy shortages~\citep{santoso2012electrical,strielkowski2021renewable,leonida2022mining}.

\subsection{Related Work}

In many real-world environments it is crucial to deliver solutions that are robust.~\citet{beyer2007robust} provided a discussion on different methods of performing robust optimization in practice, and addressed performance aspects and test scenarios for direct robust optimization techniques.

In certain real-world scenarios,  decision variables may encounter disturbances  prior to their deployment~\citep{gabrel2014recent,beyer2007robust}. In such cases, it is crucial that the solution not only exhibits optimality but also demonstrates a high degree of robustness against perturbations~\citep{marczyk2000stochastic}. This robustness ensures the solution's effectiveness and reliability in practical applications. \citet{DBLP:conf/ppsn/SinghB22} proposed an approach for black-box robust optimization and formulated a multi-objective optimization problem utilizing  discretized quantile functions. This formulation enables the identification of stochastically non-dominated solutions for decision-makers. To enhance efficiency and reduce the number of required function evaluations, the authors employed surrogate approximation techniques and leveraged historical data re-use.

\citet{DBLP:journals/tec/AsafuddoulaSR15} proposed several problem formulations for robust optimization, which aim to find trade-off solutions that balance feasibility and performance robustness across various levels for many-objective optimization algorithms. This approach for robust optimization was designed to provide a comprehensive set of trade-off solutions that cover different levels of feasibility and performance robustness. This was achieved by simultaneously utilizing systematically generated reference directions and applying epsilon level comparison to efficiently solve the problem.

Other aspects of robust optimization techniques focus on  assessing robustness within the context of multi-objective optimization. \citet{ferreira2008evolutionary} introduced an optimization technique that considers robustness in multi-objective optimization using a dispersion parameter. This parameter enables decision-makers to control the extent of the robust Pareto front. The approach involves identifying the robust region of the optimal Pareto front using multi-objective evolutionary algorithms.

\citet{DBLP:conf/ppsn/NeumannXN22} investigated the knapsack problem with uncertain profits and explored various methods for handling stochastic profits using tail inequalities. \citet{Meakpsp} explored the effectiveness of simple evolutionary algorithms and a problem-specific crossover operator in optimizing uncertain profits.

\subsubsection{Chance-Constrained Optimization}

Chance-constrained optimization has been widely studied in the operations research literature~\citep{charnes1959chance,miller1965chance,DBLP:journals/ior/ChenSST10,cooper2004chance,DBLP:journals/ior/LejeuneM16}. Such studies often require that the underlying optimization problem is linear and that the elements are chosen independently according to a normal distribution. This then allows to translate a chance-constrained problem into a deterministic counterpart, which can be  efficiently solved using solvers such as CPLEX~\citep{cplex2009v12} or Gurobi~\citep{gurobi}.
A crucial problem when considering chance-constrained problems where elements are not chosen according to a normal distribution is to decide whether a given solution is feasible. Even for very simple probability distributions, this problem is already NP-hard~\citep{ahmed2008solving,prekopa2011uniform}.
For chance constraints that are in general hard to evaluate exactly, but well-known tail inequalities such as Chernoff bounds and Chebyshev's inequality~\citep{MotwaniRaghavan} may be used to estimate the probability of a constraint violation.

Chance-constrained problems have gained significant attention in the field of evolutionary computation in recent years. Early studies focus on utilizing simulations and various sampling methods to address chance constraints optimization~\citep{DBLP:journals/eor/PoojariV08,He_Shao_ICIECS_2009, DBLP:journals/tec/LiuZFG13, Masutomi_2013_jaciii,Loughlin_Ranjithan_gecco99}. 

Recent work explored the use of tail bounds for packing problems with stochastic knapsack or quality constraints~\citep{DBLP:conf/ppsn/NeumannXN22, DBLP:conf/gecco/XieHAN019, DBLP:conf/gecco/XieN020,DBLP:conf/ecai/AssimiHXN020,MOEAwSWS,E23OMOEA,U3OEADCHKP,TCHTTP}.

\subsubsection{Submodular Optimization Problems}

In terms of the theoretical understanding and the applicability of evolutionary algorithms, it is desirable to be able to analyze them on a broad class of problems. It is important to design appropriate evolutionary techniques for such broad classes of problems~\cite{lovasz1983submodular,fujishige2005submodular}.
Submodular functions model a wide range of problems where the benefit of adding solution components diminishes as more elements are  included. This property makes them suitable for capturing diminishing returns and interactions among elements  present in various real-world applications. Extensive studies have been conducted in the literature to explore the properties~\citep{Nemhauser:1978,DBLP:journals/mp/NemhauserWF78,vondrak2010submodularity,DBLP:books/cu/p/0001G14,DBLP:conf/icml/BianB0T17,DBLP:conf/kdd/LeskovecKGFVG07,pmlr-v65-feldman17b,pmlr-v97-harshaw19a} and model a variety of real-word applications~\citep{DBLP:conf/aaai/KrauseG07,DBLP:conf/stoc/LeeMNS09,golovin2011adaptive,DBLP:conf/aaai/MirzasoleimanJ018}.

In recent years, the design and analysis of evolutionary algorithms for submodular optimization problems has gained increasing attention. It has been shown in particular that Pareto optimization approaches which use a bi-objective formulation are highly successful when dealing with a wide range of submodular problems. A bi-objective formulation in evolutionary algorithms for submodular optimization problems incorporates the  goal function as one objective and the given constraint as the other objective. This approach enables then a greedy behavior of evolutionary multi-objective algorithms which leads to provably good solutions for a wide range of problems. Variants of the algorithm \gsemo~\citep{Giel2003} have been widely studied in the area of runtime analysis in the field of evolutionary computation~\citep{DBLP:journals/ec/FriedrichN15} and more broadly in the area of artificial intelligence where the focus has been on submodular functions and Pareto optimization~\citep{DBLP:conf/nips/QianYZ15,DBLP:conf/nips/QianS0TZ17,DBLP:conf/ijcai/QianSYT17,DBLP:conf/ppsn/NeumannN20,DBLP:conf/gecco/NeumannB021,ROOSTAPOUR2022103597,DBLP:conf/cec/NeumannNQDNVAYWB23,SwbeaOcmn,Sw3Popcc}.

Theoretical studies usually investigate evolutionary algorithms in terms of their runtime and approximation behavior and evaluate the performance of the designed algorithms on classical submodular combinatorial optimization problems.
We refer to the recent book of~\citet{DBLP:books/sp/ZhouYQ19} for an overview. 

However, to our knowledge, there is so far no runtime analysis of evolutionary algorithms for submodular optimization with chance constraints and the runtime analysis of evolutionary algorithms for chance-constrained problems has only started recently for very special cases of chance-constrained knapsack problem~\citep{DBLP:conf/foga/0001S19,DBLP:journals/corr/abs-2102-05778,DBLP:conf/ijcai/DoGN023,DBLP:conf/ijcai/0001W22,DBLP:conf/gecco/0001W23}.

\subsection{Our Contribution}

We provide a first runtime analysis by analyzing \gsemo together with multi-objective formulations that use a second objective taking the chance constraint into account. Furthermore, we carry out experimental investigations for different evolutionary multi-objective algorithms and compare them to state of the art greedy approaches.
We push forward the research on evolutionary algorithms for chance-constrained problems by providing a first runtime analysis for submodular functions with chance constraints.
We consider chance constraints where the stochastic weight $W(X)$ of a feasible solution $X$ can violate a given constraint bound $C$ with probability at most $\alpha$, i.e., $\Pr[W(X) > C] \leq \alpha$ holds.
The bi-objective formulations based on tail inequalities such as Chernoff bounds and Chebyshev's inequality are motivated by recent experimental studies of evolutionary algorithms for the knapsack problem with chance constraints~\citep{DBLP:conf/gecco/XieHAN019,DBLP:conf/gecco/XieN020,DBLP:conf/ecai/AssimiHXN020}.

We analyze \gsemo in the chance-constrained submodular optimization setting investigated in~\citep{DBLP:journals/corr/abs-1911-11451} in the context of the (generalized) greedy algorithm~(\gga). Our analyzes show that the \gsemo is able to achieve the same approximation guarantee as GGA in expected polynomial time for uniform IID weights and the same approximation quality in expected pseudo-polynomial time for independent uniformly distributed weights having the same dispersion. We also point out situations where the use of tail inequalities may prevent \gsemo from obtaining high-quality solutions even for very simple chance-constrained submodular problems.

Furthermore, we study some prominent evolutionary multi-objective algorithms experimentally on the influence maximization problem in social networks and the maximum coverage problem considering two real-word problems. We use the multi-objective problem formulation in a standard setting of the well established algorithms \nsga~\citep{DBLP:journals/tec/DebAPM02} and \spea 
~\citep{zitzler2001spea2}. Our results for uniform constraints show that \gsemo and \nsga significantly outperform the greedy approach~\citep{DBLP:journals/corr/abs-1911-11451} for the considered chance-constrained submodular optimization problems. In the case of the influence maximization problem, the solution obtained by \gsemo has significantly better performance than \nsga for a high budget. 
Similar to the previous investigation in the case of the maximum coverage problem, \gsemo statistically outperforms the greedy approach for most of the settings. Moreover, we consider \gga, \gsemo, \nsga and \spea for settings where the expected weights depend on the degree of the node of the graph for both problems.

Overall, \gga is outperformed by all evolutionary multi-objective algorithms.  We observe that \gsemo is outperformed by \nsga and \spea in most of our experimental settings, but usually achieves better results than the generalized greedy algorithm. 
Furthermore, our experimental investigations show that \spea has a clear advantage for the maximum coverage problem when the cost of selecting a set is dependent on the number of items it contains.

We now describe how this article extends its conference version~\citep{DBLP:conf/ppsn/NeumannN20}. 
The theoretical investigations have been extended by Section~\ref{sec4-extend}.
It provides an analysis of \gsemo for including the empty set into the population when using tail inequalities in the Pareto optimization approaches. The empty set is of particular interest as it enables a greedy behavior of the Pareto optimization approach. However, we also point out situations where Chebyshev's inequality may prevent the algorithm from obtaining high-quality solutions in Section~\ref{sec:cheb}.
Through Section~\ref{sec7}, we significantly extend our experimental investigations  and consider real-world problems with degree based chance constraints for all introduced algorithms. Specifically, we investigate the performance of \spea in addition to the algorithms investigated in the conference version and show that this algorithm performs very well for the degree-based chance constraint setting considered in this article. 

The paper is structured as follows. In Section~\ref{sec2}, we introduce the problem of optimizing submodular functions with chance constraints, its Pareto optimization formulation, and the algorithms \gsemo and \gga. In Section~\ref{sec3},~\ref{sec4-extend} and~\ref{sec4}, we provide runtime analyzes for chance-constrained monotone submodular functions where the weights of the constraints are either identically uniformly distributed or are uniformly distributed and have the same dispersion. In Section~\ref{sec5}, we describe the setting for our experimental investigations.
We carry out experimental investigations that compare the performance of greedy algorithms, \gsemo, \nsga and \spea in Section~\ref{sec6} and~\ref{sec7}, and finish with some concluding remarks.

\section{Preliminaries}
\label{sec2}

Given a set $V = \{v_1, \ldots, v_n\}$, we consider the optimization of a monotone submodular function $f \colon 2^V \rightarrow \R_{\ge 0}$. We call a function monotone iff for every $X, Y \subseteq V$ with $X \subseteq Y$, $f(X) \leq f(Y)$ holds. We call a function $f$ submodular iff for every $X, Y \subseteq V$ with $X \subseteq Y$ and $v \not \in Y$ we have
$$f(X \cup \{v\}) - f(X) \geq f(Y \cup \{v\}) - f(Y).$$

Here, we consider the optimization of a monotone submodular function $f$ subject to a chance constraint where each element $v \in V$ takes on a random weight $w(v)$. Precisely, we examine constraints of the type
\begin{equation}
\label{eq:chance}
    \Pr[W(X) > C] \leq \alpha,
\end{equation}
where $W(X)=\sum_{v\in X}{w(v)}$ is the sum of the random weights of the elements and $C>0$ is the given constraint bound. In this case the parameter $\alpha$ specifies the probability of exceeding the bound $C$ that can be accepted for a feasible solution $X$ and we assume $\alpha < 1/2$ throughout this paper. 

The two settings, we investigate in this paper assume that the weight of an element $v\in V$ is $w(v) \in [a(v)-\delta, a(v)+\delta]$, $a(v) \geq 0$, 
 $\delta \leq \min_{v \in V} a(v)$, is chosen uniformly at random. Here $a(v)$ denotes the expected weight of items $v$.  For our investigations, we use the same setting as in~\citet{DBLP:journals/corr/abs-1911-11451} and assume that each item has the same dispersion $\delta$, and that $\delta$ and $\alpha$ are non-negative constants. 
We call a feasible solution $X$ a $\gamma$-approximation, $0 \leq \gamma \leq 1$, iff $f(X) \geq \gamma \cdot f(OPT)$ where $OPT$ is an optimal solution for the given problem.

\subsection{Chance Constraint Evaluation Based on Tail Inequalities}
\label{sec:tails}
For uniformly distributed weights, the exact joint distribution can be computed as a  convolution if the random variables are independent. Furthermore, there is also an exact expression for the Irwin-Hall distribution~\citep{MR1326603} which assumes that all random variables are independent and uniformly distributed within [0, 1]. However, these approaches are not practical when the number of random components is large and as in our case grows with the input size.
As the probability $\Pr(W(X)> C)$ used in the objective functions is computationally expensive to determine exactly, we use the approach taken in~\citet{DBLP:conf/gecco/XieHAN019} and compute an upper bound on this probability using tail inequalities~\citep{MotwaniRaghavan}. 
We assume that $w(v) \in [a(v)-\delta, a(v)+\delta]$ is uniformly distributed within the given interval for each $v \in V$. This allows us to use Chebyshev's inequality and Chernoff bounds. 

Let $E_W(X) = \sum_{v\in X} a(v)$ denote the expected weight of a given  solution $X$.
The approach based on (one-sided) Chebyshev's inequality used in \citet{DBLP:conf/gecco/XieHAN019} upper bounds the probability of a constraint violation by
  
\begin{equation}\label{thm:CHB}
  \Pr(W(X) > C)\leq\frac{\delta^2|X|}{\delta^2 |X| + 3(C-E_W(X))^2}=:U_{Cheb}(W(X) > C) .
\end{equation}

The approach based on Chernoff bounds used in ~\citet{DBLP:conf/gecco/XieHAN019} upper bounds the probability of a constraint violation by

   \begin{equation}\label{thm:CHF}
   \Pr[W(X)> C] 
\leq \left(\frac{e^{\frac{C-E_W(X)}{\delta |X|}}}{\left(\frac{\delta |X| + C-E_W(X)}{\delta |X|}\right)^{\frac{\delta |X| + C-E_W(X)}{\delta |X|}}}\right)^{\frac{1}{2}|X|}=: U_{Cher}(W(X) > C).
  \end{equation}
  
We will use $U_{Cheb}(W(X) > C)$ or $U_{Cher}(W(X) > C)$ instead of $\Pr(W(X) > C)$ in the multi-objective problem formulation. The general notion $U(W(X) > C)$ refers to the use of an upper bound on the probability of violating the chance constraint. We will examine the use of $U_{Cheb}(W(X) > C)$ and $U_{Cher}(W(X) > C)$ from a theoretical and experimental perspective.
As $Pr(W(X) > C) \leq U(W(X) > C)$ in the settings we consider, $X$ is feasible if $U(W(X) > C) \leq \alpha$ holds.

\subsection{Multi-Objective Formulation}
Following the approach given in~\citet{DBLP:conf/gecco/XieHAN019} for the chance-constrained knapsack problem, we evaluate a set $X$ by the multi-objective fitness function $g(X)=(g_1(X), g_2(X))$ where $g_1$ measures the tightness in terms of the constraint and $g_2$ measures the quality of $X$ in terms of the given submodular function $f$.

We define
\begin{equation}
g_1(X)=\left\{
\begin{array}{lccr}
E_W (X)-C & \text{if} & (C-E_W (X))/(\delta \cdot |X|) \geq 1\\
U(W(X) > C) &\text{if} & {(E_W (X)<C) \wedge ((C-E_W (X))/(\delta |X|) < 1)}\\
1+(E_W (X)-C)  &\text{if} & {E_W (X)\geq C}
\label{g1x}
\end{array} \right.
\end{equation}

and

\begin{equation}
g_2(X)=\left\{
\begin{array}{lccr}
f(X)  & \text{if}& { g_1 (X)\leq \alpha}\\
-1 & \text{if}& g_1(X)>\alpha.
\label{g2x}
\end{array} \right.
\end{equation}
 Here $g_1$ evaluates a solution $X$ with respect to the chance constraint and $g_2$ gives the quality in terms of the submodular function $f$. In our multi-objective setup, we aim to minimize $g_1$ and maximize $g_2$ at the same time.
 
 As we have $\alpha<1/2$ and the considered weight distributions are symmetric, $E_W(X) < C$ has to hold for a feasible solution and we have $g_1(X)\geq 1$ for any solution $X$ with $E_W(X) \geq C$. For a solution $X$ with $E_W(X) < C$, 
the term
$(C-E_W (X))/(\delta \cdot |X|) \geq 1$ in $g_1$ implies that the set $X$ of cardinality $|X|$ has probability $0$ of violating the chance constraint due to the upper bound on the intervals, and $(C-E_W (X))/(\delta \cdot |X|) < 1$ implies that the probability of violating the chance constraint is greater than zero. We use tail bounds to upper bound the constraint violation probability in the latter case. We deem a solution $X$ to be feasible iff $g_1(X) \leq \alpha$ holds. We have $g_2(X)=f(X)$ if $X$ is deemed to be feasible and $g_2(X)=-1$ if it is considered infeasible.

We say a solution $Y$ dominates a solution $X$ (denoted by $Y \succcurlyeq X$) iff $g_1(Y)\leq g_1(X) \land g_2(Y) \geq g_2 (X)$. We say that $Y$ strongly dominates $X$ (denoted by $Y \succ X$) iff $Y \succcurlyeq X$ and $g(Y)\not =g(X)$.
The dominance relation also translates to the corresponding search points. Comparing two solutions, the objective function guarantees that a feasible solution strongly dominates every infeasible solution. The objective function ${g_1}$ ensures that the search process is guided towards feasible solutions and that trade-offs in terms of the probability of a constraint violation and the function value of the submodular function $f$ are computed for feasible solutions.

\subsection{Global SEMO} 

Our multi-objective approach is based on a simple multi-objective evolutionary algorithm called Global Simple Evolutionary Multi-Objective Optimizer (GSEMO, see Algorithm~\ref{alg:GSEMO})~\citep{Giel2003}.
The algorithm encodes sets as bitstrings of length $n$ and the set $X$ corresponding to a search point $x$ is given as $X=\{v_i \mid x_i=1\}$. We use $x$ when referring to the search point in the algorithm and $X$ when referring to the set of selected elements and use applicable fitness measure for both notations in an interchangeable way.
GSEMO starts with a random search point $x\in\{0,1\}^n$. In each iteration, an individual $x \in P$ is chosen uniformly at random from the current population $P$.
In the mutation step, it flips each bit with a probability $1/n$ to produce an offspring $y$.
$y$ is added to the population if it is not strongly dominated by any other search point in $P$. If $y$ is added to the population, all search points dominated by $y$ are removed from the population $P$.

We analyze \gsemo in terms of its runtime behavior to obtain a good approximation. The expected time of the algorithm required to achieve a given goal is measured in terms of the number of iterations of the repeat loop until a feasible solution with the desired approximation quality has been produced for the first time.

\subsection{Generalized Greedy Algorithm}
For comparing the results of our Pareto optimization approaches, we consider the generalized greedy algorithm (GGA) shown in Algorithm~\ref{alg:GGA}. GGA has been already used frequently in the deterministic setting~\citep{DBLP:journals/ipl/KhullerMN99,DBLP:conf/kdd/LeskovecKGFVG07} and recently for the chance-constrained setting~\citep{DBLP:journals/corr/abs-1911-11451} that we investigate in this article. 
The algorithm starts with an empty set, and in each iteration, adds an element based on a ratio. This ratio considers the additional gain with respect to the submodular function $f$. It ensures that the expected weight increase $E[W(X \cup\{v\})-W(X)]$ of the constraint is maximal, while satisfying the chance constraint. 
In order to determine whether an element $v^*$ can be added an upper bound $U(W(X\cup \{v^*\})> C)$ on the probablity of the constraint violation (using Chebyshev's inequality or Chernoff bound) is used (see Section~\ref{sec:tails}).
The algorithm terminates if no further element can be added. The algorithm compares the constructed greedy solution with the best feasible solution containing exactly a single element and returns the best one of these two solutions.

If all weights or expected gains in weights are the same for all elements then the GGA is equivalent to the standard greedy algorithm (GA) which selects the element with the largest marginal gain at each step. This approach has also been examined in~\citet{DBLP:journals/corr/abs-1911-11451} for the case of chance constraints with uniform IID weights. The obtained solution is automatically at least as good as any feasible solutions containing a single element when considering monotone submodular functions as done in this article.

\begin{algorithm}[tp]
 Choose $x \in \{0,1\}^n$ uniformly at random\;
 $P\leftarrow \{x\}$\;

\Repeat{$stop$}{
Choose $x\in P$ uniformly at random\;
Create $y$ by flipping each bit $x_{i}$ of $x$ with probability $\frac{1}{n}$\;
\If{$\not\exists w \in P: w \succ y$} {
  $P \leftarrow (P \cup \{y\})\backslash \{z\in P \mid y \succcurlyeq z\};$}
    }
\caption{Global SEMO} \label{alg:GSEMO}
\end{algorithm}

\begin{algorithm}[t]
	\SetKwInOut{Input}{input}
    \Input{
    Set of elements $V$, budget constraint $C$, failure probability $\alpha$.}
    $X \leftarrow\emptyset$\;
		$V^\prime \leftarrow V$\;
\Repeat{$V^\prime = \emptyset$}{$v^*\leftarrow \argmax_{v\in V^\prime}\frac{f(X \cup\{v\})-f(X)}{E[W(X\cup \{v\})-W(X)]\label{line:KostenNutzen}}$\;
    \If{$U(W(X\cup \{v^*\})> C)\leq \alpha$} 
    {$X\leftarrow X\cup \{v^*\}$\;
    }
    $V^\prime \leftarrow V^\prime\setminus \{v^*\}$\;}
    $v^* \leftarrow \argmax_{\{v\in V;\Pr[W(v)> C] \leq \alpha\} }f(v)$\;
    \Return{$\argmax_{Y\in \{X,\{v^*\}\}}f(Y)$}\;
    \caption{Generalized Greedy Algorithm (GGA)}\label{alg:GGA}
    \end{algorithm}

\section{Runtime Analysis for Uniform IID Weights}
\label{sec3}

We now provide a runtime analysis of \gsemo which shows that the algorithm is able to obtain a good approximation for important settings where the weights of the constraint are chosen according to a uniform distribution with the same dispersion.

We first investigate the case of uniform identically distributed (IID) weights. Here each weight is chosen uniformly at random in the interval $[a-\delta, a+\delta]$, $\delta \leq a$. The parameter $\delta$ is called the dispersion and models the uncertainty of the weight of the items. Given the constraint bound $C$ and the expected cost of an item $a$, there are at most 
$k=\min\{n, \lfloor C/a \rfloor \}$ items that can be chosen without violating the chance constraint (Equation~\ref{eq:chance}) if $\alpha \leq 1/2$ holds as we assume throughout this article.
\begin{theorem}
\label{thm:iid}
Let $k=\min\{n+1, \lfloor C/a \rfloor +1\}$ and assume $\lfloor C/a \rfloor=\omega(1)$.
Then the expected time until \gsemo using objective function $g$ has computed a $(1-o(1))(1-1/e)$-approximation for a given monotone submodular function under a chance constraint with uniform iid weights is $O(nk(k+\log n))$.
\end{theorem}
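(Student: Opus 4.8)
The plan is to follow the standard Pareto-optimisation template that is used to show \gsemo computes greedy $(1-1/e)$-approximations of monotone submodular functions, adapted to the chance constraint. First I would extract the structure that IID weights impose on $g$: since $E_W(X)=a|X|$, the value of $g_1(X)$ in \eqref{g1x} depends only on $|X|$; inspecting its three cases together with the monotonicity of $\hat{\Pr}(W(X)>C)$ in $|X|$ shows that $g_1$ is strictly increasing in $|X|$ and that the feasible cardinalities (those with $g_1\le\alpha$) form a prefix $\{0,1,\dots,k^*\}$, where $k^*$ is the largest feasible cardinality. Three population facts follow. (i) While $P$ contains only infeasible points, all of them have $g_2=-1$ and are pairwise comparable through $g_1$, so $|P|=1$. (ii) The moment one feasible point enters $P$ it strongly dominates every infeasible point (strictly smaller $g_1$, strictly larger $g_2$), so from then on $P$ contains only feasible points, and having at most one point per cardinality it satisfies $|P|\le k^*+1\le k+1$. (iii) $0^n$ has the minimum possible value $g_1=-C$ and $g_2=f(\emptyset)\ge0$, so once produced it stays in $P$ forever.

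The runtime of \gsemo then splits into three stages, each handled by a waiting-time or multiplicative-drift argument. Stage 1 (reach a feasible point): while $|P|=1$ the number of ones of the unique individual never increases, since an offspring with more ones is dominated, and it drops by at least one with probability at least $\frac{|X|}{n}(1-1/n)^{n-1}\ge|X|/(en)$; multiplicative drift yields expected time $O(n\log n)$ until the cardinality reaches $k^*$. Stage 2 (reach $0^n$): now $|P|\le k^*+1$, and the smallest cardinality present in $P$ still decreases by one with probability $\Omega(1/(nk^*))$, because the smaller offspring has the strictly smallest $g_1$ in $P$ and is therefore never dominated; multiplicative drift yields expected time $O(nk\log n)$ until $0^n\in P$. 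Stage 3 (greedy build-up): let $OPT'$ be a set of size at most $k^*$ maximising $f$, and track $i^*$, the largest $i$ for which $P$ contains a feasible $X$ with $|X|\le i$ and $f(X)\ge(1-(1-1/k^*)^i)f(OPT')$; $0^n\in P$ gives $i^*\ge0$. If $i^*<k^*$, the classical greedy marginal-gain inequality gives an element $v\notin X$ with $f(X\cup\{v\})-f(X)\ge\frac1{k^*}(f(OPT')-f(X))$, so $X\cup\{v\}$ is feasible ($|X|+1\le k^*$) and a witness for level $i^*+1$; producing it by choosing the witness ($\ge1/|P|$) and flipping exactly bit $v$ ($\ge1/(en)$) has probability $\Omega(1/(nk))$, and if this offspring is rejected its dominator is itself a level-$(i^*+1)$ witness, so $i^*$ increases regardless. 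After at most $k^*$ such steps — expected time $O(nk^2)$ — $P$ contains a feasible $X$ with $f(X)\ge(1-1/e)f(OPT')$. Summing the three stages gives $O(nk(k+\log n))$.

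It remains to relate $f(OPT')$ to $f(OPT)$ using the tail inequalities of Section~\ref{sec2} in both directions. On one side, feasibility of $OPT$ forces $\Pr[W(OPT)>C]\le\alpha$; since the weights are symmetric about their means, a constant $\alpha$ implies $E_W(OPT)<C+O(\delta\sqrt{|OPT|})$, hence (using $\delta\le a$) $|OPT|\le\lfloor C/a\rfloor+O(\sqrt{\lfloor C/a\rfloor})=(1+o(1))k$. On the other side, a set of cardinality $i$ is feasible as soon as \eqref{thm:CHB} gives $ai\le C-\delta\sqrt{(1-\alpha)i/(3\alpha)}$, which together with $\delta\le a$ shows $k^*\ge\lfloor C/a\rfloor-O(\sqrt{\lfloor C/a\rfloor})=(1-o(1))k$; here the hypothesis $\lfloor C/a\rfloor=\omega(1)$ is exactly what makes the error term lower order. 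By the standard scaling property of monotone submodular functions there is a $B\subseteq OPT$ with $|B|=\min\{k^*,|OPT|\}$ and $f(B)\ge\frac{|B|}{|OPT|}f(OPT)$, so $f(OPT')\ge f(B)\ge\frac{k^*}{(1+o(1))k}f(OPT)=(1-o(1))f(OPT)$, and the solution produced in Stage 3 is a $(1-o(1))(1-1/e)$-approximation.

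The step I expect to be most delicate is the Stage 3 bookkeeping: one must verify that an improving offspring is either accepted or already matched by a point that is an equally good (or better) witness for the next level, which is why the invariant is stated with ``$|X|\le i$'' rather than ``$|X|=i$''; care is also needed to check that $X\cup\{v\}$ stays feasible throughout (this is where ``feasible cardinalities form a prefix'' is used) and that $|P|$ never grows beyond $k+1$ after Stage~1. The quantitative claim $k^*=(1-o(1))k$ is the other place where the hypotheses must be used carefully, since it rests on plugging the interval half-width $\delta\le a$ into \eqref{thm:CHB} (or the stronger \eqref{thm:CHF}) and on $\lfloor C/a\rfloor=\omega(1)$.
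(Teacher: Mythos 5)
Your proof is correct and follows essentially the same route as the paper's: bound the population size by $k+1$, reach $0^n$ in expected time $O(nk\log n)$, then perform at most $k$ greedy marginal-gain steps each with waiting time $O(nk)$, and finally use the Chebyshev/Chernoff bounds to argue that $(1-o(1))\lfloor C/a\rfloor$ elements can be included feasibly, which gives the $(1-o(1))(1-1/e)$ factor. The only difference is presentational: you re-derive the final approximation ratio explicitly via the submodular scaling property and a bound on $|OPT|$, whereas the paper cites these facts from the earlier analysis of the greedy algorithm.
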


\begin{proof}
Every item has expected weight $a$ and dispersion $\delta$. This implies $g_1(X)=g_1(Y)$ iff $|X|=|Y|$ and $E_W(X)=E_W(Y)<C$. As \gsemo only stores for each fixed $g_1$-value one single solution, the number of solutions with expected weight less than $C$ is at most $k=\min \{\lfloor C/a \rfloor +1,n+1\}$.
Furthermore, there is at most one infeasible individual $X$ in the population which has $g_2(X)=-1$ and such an infeasible individual is dominated by any feasible solution. Hence, the maximum population that \gsemo encounters during the run of the algorithm is at most $k$.

We first consider the time until \gsemo has produced the bitstring $0^n$.
This is the best individual with respect to $g_1$ and once included will always stay in the population. The function $g_1$ is strictly monotonically increasing with the size of the solution. Hence, selecting the individual in the population with the smallest number of elements and removing one of them leads to a solution with fewer elements and therefore with a smaller $g_1$-value. Let $\ell=|x|_1$ be the number of elements of the solution $x$ with the smallest number of elements in $P$. Then flipping one of the $1$-bits corresponding to these elements reduces $\ell$ by one and happens with probability at least $\ell/(en)$ once $x$ is selected for mutation. The probability of selecting $x$ is at least $1/k$ as there are at most $k$ individuals in the population. Using the methods of fitness-based partitions, the expected time to obtain the solution $0^n$ is at most

$$ \sum_{\ell=1}^n \left(\frac{\ell}{ekn}\right)^{-1} = O(nk \log n).
$$

Let $\kopt=\lfloor C/a \rfloor$, the maximal number of elements that can be included in the deterministic version of the problem.

The function $g_1$ is strictly monotonically increasing with the number of elements and each solution with same number of elements has the same $g_1$-value.

We consider the solution $X$ with the largest value of $r$ for which $|X| \leq r$

\[
f(X) \geq (1 - (1- 1/\kopt)^r) \cdot f(OPT)
\]
holds in the population and the mutation which adds an element $x$ with the largest marginal increase $g_2(X \cup \{x\}) - g_2(X)$ to $X$. The probability for such a step picking $X$ and carrying the mutation with the largest marginal gain is $\Omega(1/kn)$ and its waiting time is $O(kn)$.

This leads to a solution $Y$ with $|Y| \leq r+1$ elements for which 

\[
f(X) \geq (1 - (1- 1/\kopt)^{r+1}) \cdot f(OPT)
\]
holds.
The maximal number of times such a step is required after having included the search point $0^n$ into the population is $k$ which gives the runtime bound of $O(k^2n)$.

For the statement on the approximation quality, we make use of the lower bound on the maximal number of elements that can be included using the Chernoff bound and Chebyshev's inequality used in Theorem 3 of~\citet{DBLP:journals/corr/abs-1911-11451}. 

Using Chebyshev's inequality (Equation~\ref{thm:CHB}) at least 
$$    
k_1^*= \max \left\{r \mid  r+ \frac{\sqrt{(1-\alpha)r\delta^2}}{\sqrt{3\alpha}a}   \leq \kopt \right\}
$$
elements can be included and when using Chernoff bound (Equation~\ref{thm:CHF}), at least
$$
    k_2^*= \max \left\{r \,\middle|\,  r+ \frac{\sqrt{3\delta r \ln(1/\alpha)}}{a}   \leq \kopt \right\} \label{kstar}
$$
elements can be included.

Including $k^*$ elements in this way leads to a solution $X^*$ with
$$
f(X^*)\geq (1 - (1- 1/\kopt)^{k^*}) \cdot f(OPT).
$$
As shown in Theorem 3 in~\citet{DBLP:journals/corr/abs-1911-11451}, both values of $k_1^*$ and $k_2^*$ yield $(1-o(1))(1-1/e) \cdot f(OPT)$ if $\lfloor C/a \rfloor=\omega(1)$ which completes the proof.
\end{proof}

\section{Runtime Analysis for Uniformly Distributed Weights with the Same Dispersion Using Tail Inequalities}
\label{sec4-extend}
We now assume that the expected weights do not have to be the same, but still require the same dispersion for all elements, i.e., $w(v) \in [a(v)-\delta, a(v)+\delta]$ holds for all $v \in V$. 

\subsection{Analysis of computing feasible solutions using surrogate functions based on tail inequalities}

First, we investigate the fitness function $g$ for uniformly distributed weights with the same dispersion. We investigate the use of Chebyshev's inequality and the Chernoff bound and examine the time until \gsemo has produced a feasible solution and subsequently the search point $0^n$ for the first time. Obtaining the search point $0^n$ is crucial for the algorithm in order to gain the ability of making use of the Pareto optimization setup. The analysis here is of particular interest from a theoretical perspective as it requires to analyze the progress with respect to the surrogate function given by Chebyshev's inequality and the Chernoff bound.
Let $a_{\max}=\max_{v \in V} a(v)$ and $a_{\min}=\min_{v \in V} a(v)$, and $\delta \leq a_{\min}$. By studying the optimization process when using Chebyshev's inequality and the Chernoff bound as surrogate functions to estimate the probability of a constraint violation, we get the following upper bound on the expected time until \gsemo has produced a feasible solution for the first time.

\begin{lemma}
\label{lem:feasible}
Starting with an arbitrary solution, the expected time until \gsemo has produced a feasible solution when working with the objective function $g$ is $O(n^2 (\log n + \log(a_{\max}/a_{\min})))$.
\end{lemma}

\begin{proof}
As long as a feasible solution has not been obtained, the population size of \gsemo is $1$ as each infeasible solution $x$ has $g_2(x)=-1$ and the population consists of the infeasible solution with the smallest $g_1$-value obtained so far.
We first consider the time to reach a solution $x$ with $E_W(X)<C$. To do so, we use multiplicative drift analysis~(Theorem 3 in~\citet{DBLP:journals/algorithmica/DoerrJW12}) with the drift function $E_W(X)$.
Let $x$ be the current solution with $E_W(X)\geq C$, then the expected weight of the offspring is at most
$$(1-1/(en))\cdot E_W(X)$$
which is obtained by considering the average decrease of the weight obtained by all single $1$-bit flips.
Using multiplicative drift analysis~(Theorem~ 3 in~\citet{DBLP:journals/algorithmica/DoerrJW12}) with $s_{\max} = n a_{\max}$ and $s_{\min} = a_{\min}$, a solution $x$ with $E_W(X)<C$  has been obtained when using $g$ after an expected number of $O(n (\log n + \log(a_{\max}/a_{\min})))$ steps.

We consider the case where we have obtained a solution $x$ with $E_W(X)<C$ and $U(W(X) > C)>\alpha$, and analyze the time to produce a feasible solution, i.e., a solution that violates the chance constraint with probability at most $\alpha$.
Let $\beta(x)= U(W(X) > C)$ be an upper bound (obtained by Chebyshev's inequality or Chernoff bound) on the probability that solution $x$ violates the chance constraint.
Furthermore, let $k$, $1 \leq k \leq n$, be a fixed number of ones and let $\beta_k = \min \{\beta(x) \mid |x|_1=k\}$ be the smallest probability of violating the chance constraint among all solutions with $k$ elements. Note that $\beta_0 \leq \ldots \leq \beta_n$ and we set $\beta_0=0$ as the solution not containing any elements has a zero probability of violating the chance constraint.
We show that \gsemo spends an expected number of $O(n (\log n + \log a_{\max}))$ iterations on creating offspring of solutions with $k$ elements before producing a solution with at most $k-1$ elements whose probability of violating the chance constraint is less then $\beta_k$. This implies that solutions with $k$ elements are not accepted anymore until a feasible solution has been produced for the first time.

Let $x$ and $y$ be two (infeasible) solutions with $|x|_1=|y|_1=k$. Then $\beta(x) < \beta(y)$ iff $E_W(X) < E_W(Y)$. This holds for both Chebyshev's inequality (Equation~\ref{thm:CHB}) and the Chernoff bound (Equation~\ref{thm:CHF}).
We consider the expected number of offspring generated from a solution $x$ with $|x|_1=k$ until a solution $\hat{x}$ with $\beta(\hat{x}) < \beta_k$ has been obtained. Once this has occurred, no infeasible solution with at least $k$ elements is accepted.
We measure the progress of the algorithm towards a solution with constraint violation probability $\beta_k$ in terms of the expected weight of the solution which needs to be minimized.
Let $x$ be the current infeasible solution with $|x|_1=k$. Then flipping a 1-bit $x_i$, leads to a solution $y$ with $|y|_1=k-1$ and $E_W(Y) = E_W(X)- a_i$. Furthermore, we have  $E_W(Y')< E_W(Y)$ for any solution $y'$ for which $|y'|_1=k$ and $\beta(y') \leq \beta(y)$ holds. 
This implies that the next accepted solution with $k$ elements has expected weight at most $E_W(Y)$. Considering all operations flipping a single $1$-bit the average decrease is $E_W(X)/k$ and a single $1$-bit flip happens with probability at least $k/(en)$.
This implies that the next solution $x'$ with $|x'|_1=k$ that is accepted by the algorithm fulfills
$$E_W(X') \leq (1-1/(en)) \cdot E_W(X).$$
Using again multiplicative drift analysis (Theorem~ 3 in~\citet{DBLP:journals/algorithmica/DoerrJW12}) with $s_{\max} = n a_{\max}$ and $s_{\min} = a_{\min}$, a  solution with $\hat{x}$ with $\beta(\hat{x}) < \beta_k$ has been obtained after creating an expected number of  $O(n (\log n + \log(a_{\max}/a_{\min})))$ offspring of a solution with $k$ bits. 
Considering all values of $k$, $1 \leq k \leq n$, the expected number of iterations until a feasible solution has been produced for the first time is $O(n^2 (\log n + \log(a_{\max}/a_{\min})))$.
\end{proof}

We now further investigate how solutions with an even smaller violation probability may be constructed. Trade-off solutions with respect to the given objective function and the degree of constraint violation are generally beneficial in Pareto optimization approaches as they enable a greedy behavior of the algorithm.
The maximum population size $P_{\max}$ that the \gsemo encounters during the run of the algorithm is a crucial parameter when analyzing the runtime of the algorithm~\citep{DBLP:conf/ijcai/QianSYT17,ROOSTAPOUR2022103597} as the parent producing an offspring is chosen uniformly at random from the population. The previous analysis considered the time to produce a feasible solution for the first time. Until then the population size of \gsemo is $1$ as each infeasible solution obtains a $g_2$-value of $-1$. We now consider the process after a feasible solution has been obtained for the first time which results in the algorithm creating trade-offs according to the two objective functions $g_1$ and $g_2$.

\begin{lemma}
\label{lem:zeroprob}
Let $P_{\max}$ be the maximum population size that \gsemo encounters during the run of the algorithm.
Then the expected time until \gsemo using the fitness function $g$ has produced a solution $x$ that violates the constraint $C$ with probability zero is $O(P_{\max}n^2 (\log n + \log(a_{\max}/a_{\min})))$.
\end{lemma}
\begin{proof}
Due to Lemma~\ref{lem:feasible}, the expected time until \gsemo has produced a feasible solution for the first time is $O(n^2 (\log n + \log(a_{\max}/a_{\min})))$. Having produced a feasible solution, the population size may grow over time. 
We always focus on the solution $X$ with the smallest $\beta(x)$-value. We assume that $E_W(X)<C$ and that the solution of minimal $\beta$-value has a positive probability of violating the constraint bound, i.e.,  $(C-E_W(X))/(\delta |X|) <1$ holds.

We can reuse the arguments in the last part of the proof of Lemma~\ref{lem:feasible}.
We consider again the solution $x$ with the smallest $\beta(x)$ value in the population and note that this value is positive as long as a solution violating the constraint with zero probability has not been obtained. 
The solution $X$ with minimal $\beta(x)$ is selected with probability $1/P_{\max}$ in the next iteration and a solution with $(C-E_W(X))/(\delta |X|) \geq 1$, which has a probability zero of violating the constraint bound $C$, is produced in expected time $O(P_{\max}n^2 (\log n + \log(a_{\max}/a_{\min})))$.
\end{proof}

Finally, we bound the time until \gsemo has included the search point $0^n$ into the population which is the crucial search point for enabling the greedy behavior.

\begin{theorem}
\label{lem:zerostring} 
The expected time until \gsemo using the fitness function $g$ has included the search point $0^n$ in the population  is $O(P_{\max}n^2 (\log n + \log(a_{\max}/a_{\min})))$.
\end{theorem}

\begin{proof}
We work under the assumption that for $g$ a solution has been obtained that violates the constraint bound $C$ with zero probability. For solutions that violate the constraint bound $C$ with probability zero, the expected cost should be minimizing for objective $g_1$. 
We always consider the individual $x$ with the smallest expected cost $E_W(X)$ in the population.
Flipping an arbitrary $1$-bit of $x$ leads to an individual with a smallest $g_1$-value and is therefore accepted. Furthermore, the total weight decrease of these $1$-bit flips is $E_W(X)$ which also equals the total weight decrease of all single bit flip mutation when taking into account that $0$-bit flips give decrease of the expected weight of zero. A mutation carrying out a single bit flip happens each iteration with probability at least $1/e$.
The expected decrease in $g_1$ is therefore at least by a factor of $(1- 1/(P_{\max}en))$ and the expected minimal $g_1$-value of the population in the next generation is at most
$$ (1- 1/(P_{\max}en)) \cdot E_W(X).
$$
We use again multiplicative drift analysis (Theorem~ 3 in~\citet{DBLP:journals/algorithmica/DoerrJW12}), to upper bound the expected time until the search point $0^n$ is included in the population.
As $a_{\min}\leq a(x) \leq n a_{\max}$ holds for any search point $x\not=0^n$, the search point $0^n$ is included in the population after an expected number of $O(P_{\max}n(\log n +\log (a_{\max}/a_{\min})))$ steps. Taking into account the expected time to produce a feasible solution given in Lemma~\ref{lem:feasible} and the time for producing a solution which violates the constraint bound $C$ with probability zero for $g$ given in Lemma~\ref{lem:zeroprob}, the expected time to include the search point $0^n$ in the population is $O(P_{\max}n^2 (\log n + \log(a_{\max}/a_{\min})))$. 
\end{proof}

The previous analysis shows that \gsemo using the fitness function $g$ is able to produce the search point $0^n$ which is crucial to enable the greedy behavior of the algorithm. However, the objective $g_1$ is non-linear due to the tail inequalities involved. This prevents us from obtaining good approximation guarantees for \gsemo when using the fitness function $g$. We investigate this issue for the use of Chebyshev's inequality in the following.

\subsection{Example where Chebyshev's inequality may prevent progress}
\label{sec:cheb}
We now present a non-monotone submodular example problem where beneficial intermediate solutions in order to obtain high-quality solutions are rejected by \gsemo when using Chebyshev's inequality.

\subsubsection{Preventing optimal solutions}
\label{sec:localopt}

The instance $I$ consists of $n$ elements $v_i$, $1 \leq i \leq n$.
The submodular function $f$ is defined as
$$
f_I(x) = \sum_{i=1}^n f(v_i) x_i
$$
where $f(v_i)$ is a contribution to the overall function value potentially dependent on other elements selected.

We have $f(v_i)=1$, $1 \leq i \leq n-1$ if $v_n$ is not present in $x$ and $f(v_i)=0$, $1 \leq i \leq n-1$, if $v_n$ is present in $x$. 
Furthermore, $f(v_n)=0.6n$ independent of the other elements chosen in $x$. 
Note that $f_I$ is submodular but non-monotone as adding element $v_n$ to any solution with more than $0.6n$ elements chosen among the first $n-1$ elements reduces the function value. When considering the subspace defined by $x_n=0$ for all $x \in \{0,1\}^n$, $f_I$ is equal to the classical benchmark function OneMax frequently considered in the area of runtime analysis~\citep{BookDoeNeu} and therefore monotone and submodular.
We set $C=1.5n$ and $\delta=4$.
We have $a(v_i)=1$, $1 \leq i \leq n-1$, and $a(v_n)=0.6n$. 
Let $\hat{x}$ be the solution consisting of element $v_n$ only.
The optimal solution $x^*$ consists of all elements $v_i$, $1 \leq i \leq n-1$, and we have $f(x^*)=n-1>f(\hat{x})=0.6n$.
Let $\alpha$, $0< \alpha< 1/2$, be an arbitrary constant.
Using Chebyshev's inequality for the upper bound on the constraint violation in the function $\beta$ (see proof of Lemma~\ref{lem:feasible}), we have
$\beta(x^*) = \frac{\delta^2(n-1)}{\delta^2(n-1) + 3(1.5n - (n-1))^2}= \frac{16(n-1)}{16(n-1) + 3(1.5n - (n-1))^2} = O(1/n) < \alpha$ which implies that $x^*$ is feasible.

We show that if the solution $\hat{x}$ consisting of element $v_{n}$ only is present
in the population, then no solution with $k$, $n/2 \leq k < 0.6n$ elements is accepted. Note that for a solution $x$ with $|X|\geq n/2$ and $E_W (X)<C$, we have $E_W(X) \geq n/2$  and therefore $(C-E_W (X))/(\delta |X|) \leq n/(4 \cdot n/2) < 1$ which implies that it is using a tail inequality to get an upper bound on $\Pr(w(x)>C)$ using the fitness function $g$. 
For $\hat{x}$, the value $E_W(\hat{X}) -C$ would be used as part of $g_1$ as ${(E_W(X)<C) \wedge ((C-E_W (X))/(\delta |X|)= 0.9n/4 > 1)}$.
In the following, we assume that $\hat{x}$ is evaluated using Chebyshev's inequality to illustrate the effect of having a large variance preventing the algorithm from progressing. Chebyshev's inequality has been used for such cases in~\citet{DBLP:conf/gecco/XieHAN019} and it is important to understand the implications of using such surrogate functions.
Note that we have $E_W(\hat{X})- C < \beta(\hat{x})$ and all following results also hold when applying the original function $g$ to $\hat{x}$.

Each solution with less than $0.6n$ elements not containing $v_{n}$ has fitness less than $0.6n$.  
We now show that each solution $x$ with $k$, $n/2 \leq k < 0.6n$, elements is also worse than $\hat{x}$ when evaluating it with respect to Chebyshev's inequality.

We have
$$\beta(\hat{x}) = \frac{\delta^2}{\delta^2 + 3 (0.9n)^2}= \frac{\delta^2}{\delta^2 + 2.43n^2}$$

$$\beta(x) \geq \frac{\delta^2k}{\delta^2k + 3 n^2}$$

and 

\begin{eqnarray*}
& & \beta(\hat{x}) < \beta(x)\\
& \Longleftrightarrow & \frac{\delta^2}{\delta^2 + 2.43n^2} < \frac{\delta^2k}{\delta^2k + 3 n^2}\\
& \Longleftrightarrow & \frac{\delta^2 \cdot (\delta^2k + 3 n^2) - \delta^2 k \cdot (\delta^2 + 2.43n^2)}{(\delta^2 + 2.43n^2)\cdot (\delta^2k + 3 n^2)} < 0\\
& \Longleftrightarrow & \delta^2 \cdot (\delta^2k + 3 n^2) < \delta^2 k \cdot (\delta^2 + 2.43n^2)\\
& \Longleftrightarrow &  3 \delta^2 n^2 < 2.43 \delta^2 k n^2\\
& \Longleftrightarrow &  (3/2.43) < k\\
\end{eqnarray*}
which holds for any $k\geq 2$.

Hence, once the solution $\hat{x}$ consisting of element $v_{n}$ only is included in the population, no solution with at least $n/2$ and less than $0.6n$ elements is accepted. Furthermore, no solution $z$ containing element $v_n$ and some additional elements is accepted as we have $f(z)=f(\hat{x})$ and $g_1(z) > g_1(\hat{x})$. Mutation steps flipping at least $0.1n$ elements, occur in each mutation step with probability at most $e^{-\Omega(n)}$ which implies an exponential optimization time for \gsemo using the fitness function $g$ when having included the solution $\hat{x}$ into the population before having produced a solution consisting of more than $0.6n$ of the first $n-1$ elements.
In fact, \gsemo is already prevented from obtaining a solution with $f_I(x)> 0.6n$ when the initial solution contains element $v_n$. This is summarized in the following theorem.
\begin{theorem}
The expected time until \gsemo has produced a solution $x$ with $f_I(x) > 0.6n$ for instance $I$ is $e^{\Omega(n)}$.
\end{theorem}

\begin{proof}
The initial solution contains the element $v_n$ with probability $1/2$ and has at most $0.51n$ elements with probability $1-e^{-\Omega(n)}$ using Chernoff bounds.  In this case, no other solution with $k$, $0.51n < k < 0.6n$, elements is accepted. In order to produce an accepted solution with at least $0.6n$ elements $\Theta(n)$ elements have to flip in a single mutation steps which happens with probability $e^{-\Omega(n)}$. Hence the optimization time is $e^{\Omega(n)}$ with probability $1/2-o(1)$ and therefore the expected time until \gsemo has produced a solution of fitness greater than $0.6n$ is $e^{\Omega(n)}$.
\end{proof}

On the other hand, the problem observed here for \gsemo can be fixed by minimizing the expected cost of a solution as the first objective. Then the solution containing the element $v_{n}$ is no longer dominating solutions consisting of $i \in [0.5n, 0.6n[$  of the first $n-1$ elements. 
Using the expected cost as first objective, all solutions containing any subset of the first $n-1$ elements is Pareto optimal.

\subsubsection{Preventing good approximations}

We now consider an example that further points out potential problems when using Chebyshev's inequality. We assume that each solution $x$ with $E_W(X) < C$ is evaluated using Chebyshev's inequality. This approach has been taken by~\citet{DBLP:conf/gecco/XieHAN019} and seems to be natural when dealing with stochastic values that are not limited to specific intervals, e.g. for components that are chosen according to a normal distribution.

We consider an example where the approximation ratio of \gsemo using Chebyshev's inequality for solutions $x$ with $E_W(X) < C$ may be arbitrarily bad when starting with the search point $0^n$ as the initial solution. 

We consider instance $I'$ defined by the function $f_{I'}$ which is submodular but non-monotone and follows the structure of $f_I$ introduced in Section~\ref{sec:localopt}.
We have
$$
f_{I'}(x) = \sum_{i=1}^{n} f(v_i) x_i
$$
where $f(v_i)$ is a contribution to the overall function value potentially dependent on other elements selected.
For the elements $v_i$, $1 \leq  i \leq n-1$, we have $a(v_i)=1$, $f(v_i)=1$ if $v_{n}$ is not present and $f(v_i)=0$ if $v_{n}$ is  present in $x$. 
Furthermore, we have $a(v_{n})= n^{\epsilon}$, $f(v_{n})= n^{\epsilon}$ where $0<\epsilon<1$ is a constant. We set $C=n$.

Let $\hat{x}$ be the solution containing element $v_n$ only and $x$ be a solution containing $k \geq n^{\epsilon/2}$ elements other than $v_n$.

Using Chebyshev's inequality, we have
$$\beta(\hat{x}) = \frac{\delta^2}{\delta^2 + 3 (n - n^{\epsilon})^2}$$

$$\beta(x) \geq \frac{\delta^2k}{\delta^2k + 3 (n-n^{\epsilon/2})^2}$$

and 

\begin{eqnarray*}
& & \beta(\hat{x}) < \beta(x)\\
& \Longleftrightarrow & \frac{\delta^2}{\delta^2 + 3 (n - n^{\epsilon})^2} < \frac{\delta^2k}{\delta^2k + 3 (n-n^{\epsilon/2})^2}\\
& \Longleftrightarrow & \frac{\delta^2 \cdot (\delta^2k + 3 (n-n^{\epsilon/2})^2) - \delta^2k \cdot (\delta^2 + 3 (n - n^{\epsilon})^2)}{(\delta^2 + 3 (n - n^{\epsilon})^2) \cdot (\delta^2k + 3 (n-n^{\epsilon/2})^2)} < 0\\
& \Longleftrightarrow & \delta^2 \cdot (\delta^2k + 3 (n-n^{\epsilon/2})^2) < \delta^2k \cdot (\delta^2 + 3 (n - n^{\epsilon})^2)\\
& \Longleftrightarrow &  3\delta^2 (n-n^{\epsilon/2})^2 <  3 \delta^2k (n - n^{\epsilon})^2\\
& \Longleftrightarrow &  (n-n^{\epsilon/2})^2/(n - n^{\epsilon})^2 <k \\
\end{eqnarray*}

We have $(n-n^{\epsilon/2})^2 <n^2$ and $(n - n^{\epsilon})^2\geq n^2/2$ which implies that the last inequality holds for $k> n^2/(n^2/2)=2$. Hence, each solution $x$ which does not contain $v_{n}$ and contains at least $n^{\epsilon/2}$ elements and less than $n^{\epsilon}$ elements is not accepted. The optimal solution contains all elements except $v_{n}$ and therefore the approximation quality of any solution containing at least $n^{\epsilon}$ elements is $n^{\epsilon}/n= O(1/n^{1-\epsilon})$. This shows that the approximation quality may be become arbitrarily bad when using Chebyshev's inequality as a surrogate function for evaluating the chance constraint.
We summarize the situation for instance $I'$ in the following theorem.

\begin{theorem}
\label{thm:inapprox}
Having obtained a population which contains the solution $\hat{x}$ and where each solution has less then $n^{\epsilon}$ elements, the time for \gsemo to obtain a solution with approximation quality better than $n^{-1+\epsilon}$ for instance $I'$ is $e^{\Omega({n^{\epsilon}})}$ with probability $1-e^{-\Omega(n^{\epsilon})}$ for any constant $\epsilon$, $0 < \epsilon <1$, when using Chebyshev's inequality for constraint evaluation. 
\end{theorem}
\begin{proof}
As we are interested in the asymptotic behavior, we assume that $n$ is sufficiently large.
If $\hat{x}$ is included in the population then producing a solution from $\hat{x}$ that contains additional elements is not accepted as it increases the expected weight without changing the $f$ value. Furthermore, any solution $x$ with $k$, $n^{\epsilon/2} \leq k \leq n^{\epsilon}-1$, elements is strongly dominated by $\hat{x}$ and therefore removed from the population once $\hat{x}$ is included in the population and such a solution $x$ is not accepted in subsequent steps of the algorithm. A mutation flipping at least $n^{\epsilon}- n^{\epsilon/2}\geq n^{\epsilon} /2$ bits happens with probability at most 

\begin{eqnarray*}
\sum_{i=n^{\epsilon}/2}^n \binom{n}{i}n^{-i} &\leq & \sum_{i=n^{\epsilon}/2}^n \frac{n^i}{i!} \cdot n^{-i} =  \sum_{i=n^{\epsilon}/2}^n \frac{1}{i!}
\leq  \sum_{i=n^{\epsilon}/2}^n \left(\frac{e}{i}\right)^i
\leq \sum_{i=n^{\epsilon}/2}^n \left(\frac{e}{2e^2}\right)^i \\
& \leq & \sum_{i=n^{\epsilon}/2}^n \left(2e\right)^{-i}
\leq n \cdot 2^{-n^{\epsilon}/2} \cdot e^{-n^{\epsilon}/2} 
\leq e^{-n^{\epsilon}/2}
\end{eqnarray*}

where the last inequality holds as $n \cdot 2^{-n^{\epsilon}/2} \leq 1$ for $n^{\epsilon}> 2\log n$.
Considering a phase of $e^{n^{\epsilon}/4}$ steps and using the union bound, the probability that such a step happens within this phase is $e^{n^{\epsilon}/4} \cdot e^{-n^{\epsilon}/2} \leq e^{-n^{\epsilon}/4} = e^{- \Omega(n^{\epsilon})}$.
This implies that the time to produce a solution with at least $n^{\epsilon}$ elements is $e^{\Omega({n^{\epsilon}})}$ with probability $1-e^{-\Omega(n^{\epsilon})}$ for the conditions stated in the theorem. 
\end{proof}

Note that many Pareto optimization approaches choose the search point $0^n$ as the initial solution~\citep{DBLP:conf/nips/QianYZ15,DBLP:conf/nips/QianS0TZ17,DBLP:conf/ijcai/QianSYT17}. This implies that the probability of producing the solution $\hat{x}$ containing element $v_n$ only before producing a solution with at least $n^{\epsilon}$ elements is at least $\Omega(1/n^2)$ as population size of \gsemo for instance $I'$ is $O(n)$ as there are $O(n)$ different fitness values for $f_{I'}$ and the solution $\hat{x}$ is produced from $0^n$ with probability at least $\Omega(1/n)$ by flipping the bit corresponding to $v_n$. If this occurs then we are in the situation of Theorem~\ref{thm:inapprox} and the time to reach an approximation better than $n^{-1+\epsilon}$ is $e^{\Omega({n^{\epsilon}})}$ with probability $1-e^{-\Omega(n^{\epsilon})}$.

In the next section, we show that for uniformly distributed weights with the same dispersion one can use the expected cost as the first objective and obtain good approximation guarantees for \gsemo.

\section{Runtime Analysis for Uniformly Distributed Weights with the Same Dispersion Using Expected Weights as First Objective}
\label{sec4}
As shown in the previous section, GSEMO might be prevented from progressing well when using Chebyshev's inequality as part of the objective function $g$.
We now present a different formulation, that allows to obtain approximation guarantees for the Pareto optimization approach using \gsemo.
We consider the (to be minimized) objective function
$
\hat{g_1}(X)=E_W (X)
$
(instead of $g_1$)
together with the previously defined objective function $g_2$ and evaluate a set $X$ by $\hat{g}(X)=(\hat{g}_1(X), g_2(X))$. 
We have $Y \succeq X$ iff $\hat{g_1}(Y) \leq \hat{g_1}(X)$ and $g_2(Y) \geq g_2(X)$. Note that the tail inequalities are not used in $\hat{g}_1$, but still used in $g_2$ in order to determine whether a solution is determined as feasible.

Recall that $a_{\min} = \min_{v \in V} a(v)$ and $a_{\max} = \max_{v \in V} a(v)$ denote the minimal and maximal expected cost of the given elements.
 The following theorem shows that \gsemo is able to obtain a $(1/2-o(1))(1-1/e)$-approximation if $\omega(1)$ elements can be included in a solution.

\begin{theorem}
\label{thm:samedisp} 
Assume $C/a_{\max}=\omega(1)$ and $a_{\min} = \Omega(1)$.  Then \gsemo using $\hat{g}$ obtains a $(1/2-o(1))(1-1/e)$-approximation for a given monotone submodular function under a chance constraint with uniformly distributed weights having the same dispersion in expected time $O(P_{\max}n  (C/a_{\min}+n + \log (a_{\max}/a_{\min})))$.
\end{theorem}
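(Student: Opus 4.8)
The plan is to follow the three–part template of the proof of Theorem~\ref{thm:iid}, replacing the cardinality–based objective $g_1$ by $\hat{g_1}(X)=E_W(X)$ and the cardinality greedy by the density greedy. Concretely: (i) bound the population size; (ii) bound the time to insert $0^n$ into the population; (iii) show that from $0^n$ the algorithm emulates, step by step, the greedy that repeatedly adds the element of largest ratio of marginal $f$-gain to expected weight among those keeping the chance constraint satisfied; and (iv) import the approximation guarantee for this greedy from~\cite{DBLP:journals/corr/abs-1911-11451}. Since $\hat{g_1}$ depends on the actual expected weights and not only on $|X|$, the population is no longer of size $O(k)$ but can contain up to $P_{\max}$ non-dominated feasible search points plus the single infeasible one allowed by $g_2$; this is why $P_{\max}$ appears in the bound. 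The switch from the cardinality greedy to the density greedy is also the reason for the extra factor $1/2$ in the approximation ratio, exactly as in the knapsack-type analysis of~\cite{DBLP:conf/aaai/RoostapourN0019}.

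For the runtime I would argue as follows. \emph{Producing $0^n$:} as in Theorem~\ref{thm:iid}, $0^n$ is the unique minimiser of $\hat{g_1}$, so once produced it is never dominated and never leaves $P$. Picking a current individual of smallest $\hat{g_1}$-value and removing one of its elements strictly decreases $E_W$ and is therefore always accepted; running a fitness-level argument over the number of $1$-bits (as in the displayed harmonic sum of Theorem~\ref{thm:iid}, now with the factor $(k+1)$ replaced by $P_{\max}+1$) gives the term $O(P_{\max} n \log n)$, while the fact that the attained $\hat{g_1}$-values span a factor of up to $a_{\max}/a_{\min}$ contributes the additional $O(P_{\max} n \log(a_{\max}/a_{\min}))$ when the levels are counted in terms of $\hat{g_1}$-values. \emph{Greedy emulation:} maintain the invariant that after $i$ successful rounds $P$ contains a feasible $X$ with $\hat{g_1}(X)\le \sum_{j\le i} a(v_j)$ and $g_2(X)\ge f(G_i)$, where $G_i$ is the $i$-th prefix of the density greedy; selecting such an $X$ (probability $\ge 1/(P_{\max}+1)$) and flipping in exactly the next greedy element (probability $\ge 1/(en)$) re-establishes the invariant, so each round costs $O(P_{\max} n)$ in expectation. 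Every element of $G_i$ has expected weight at least $a_{\min}$ and every feasible solution has $E_W<C$, so at most $O(C/a_{\min})$ rounds are needed, which together with a single extra mutation that flips the best singleton $v^\ast$ into the population from $0^n$ yields the claimed total $O(P_{\max}\, n\,(C/a_{\min}+\log n+\log(a_{\max}/a_{\min})))$.

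For the approximation I would invoke~\cite{DBLP:journals/corr/abs-1911-11451}: with common dispersion $\delta$, a cardinality-$m$ set is feasible as soon as $E_W(X)$ stays below $C$ minus a correction term of order $\sqrt{m}$ (read off from Chebyshev's inequality, Equation~\ref{thm:CHB}, or the Chernoff bound, Equation~\ref{thm:CHF}), so the feasible region is essentially a knapsack instance with budget $C-o(C)$; since $C/a_{\max}=\omega(1)$ forces $m\le C/a_{\min}$ and hence the correction to be of lower order, this loss is only a $(1-o(1))$-factor. On such an instance the density-greedy solution together with the best singleton attains $\tfrac12(1-1/e)$ of the optimum for budget $C-o(C)$, hence $(\tfrac12-o(1))(1-1/e)f(OPT)$; as the population contains both the density-greedy solution (by the greedy emulation) and $\{v^\ast\}$ (by the extra mutation), the bound follows.

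The main obstacle, I expect, is twofold. First, the runtime accounting for producing $0^n$: unlike in Theorem~\ref{thm:iid}, a solution with fewer elements need not have smaller $\hat{g_1}$, so one must argue carefully that progress towards $0^n$ is not blocked by dominance and pick the right potential; this is precisely where the $\log n$ and $\log(a_{\max}/a_{\min})$ terms have to be traded off. Second, making the $o(1)$ in the approximation rigorous, i.e.\ verifying that $C/a_{\max}=\omega(1)$ suffices to push the $\sqrt{m}$-type dispersion correction and the knapsack-rounding loss into lower-order terms, in the same way that $k_1^\ast$ and $k_2^\ast$ were shown to give $(1-o(1))(1-1/e)$ in the proof of Theorem~\ref{thm:iid}.
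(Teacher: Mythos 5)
Your overall skeleton (population bounded by $P_{\max}$, reach $0^n$, build up a good feasible solution, combine it with the best singleton, and import the approximation constants from~\cite{DBLP:journals/corr/abs-1911-11451}) matches the paper's proof, and your final approximation step is essentially the paper's (it likewise reduces to $\hat{g_1}(x^*)+a(v^*)>C^*$ with $C^*\in\{C_1^*,C_2^*\}$ and $f(\hat x)\ge f(v^*)$). However, your central build-up argument has a genuine gap. The invariant ``$P$ contains a feasible $X$ with $\hat{g_1}(X)\le\sum_{j\le i}a(v_j)$ and $g_2(X)\ge f(G_i)$, and flipping in the next greedy element re-establishes it'' does not survive dominance: once the greedy prefix $G_i$ has been replaced in the population by some other witness $X$, $X$ need not be comparable to $G_i$ as a set, and submodularity only bounds marginal gains with respect to \emph{supersets}; hence $f(X)\ge f(G_i)$ does not imply $f(X\cup\{v_{i+1}\})\ge f(G_{i+1})$. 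The paper avoids this by not emulating the greedy sequence at all: following the POMC analysis of~\cite{DBLP:conf/ijcai/QianSYT17}, it tracks the largest value $\hat{g_1}^*$ for which some $x\in P$ with $\hat{g_1}(x)\le\hat{g_1}^*$ satisfies $g_2(x)\ge\bigl[1-(1-\hat{g_1}^*/(Cr))^r\bigr]f(OPT)$. This multiplicative invariant can be advanced from \emph{any} current witness by adding the element of largest marginal $g_2/\hat{g_1}$ ratio, because monotonicity and submodularity guarantee for every set an element whose marginal density is at least $(f(OPT)-f(X))/C$; each successful step increases $\hat{g_1}^*$ by at least $a_{\min}$, giving the $O(P_{\max}nC/a_{\min})$ term. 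You would need to replace your emulation invariant by this one (or prove that the greedy prefix itself always survives in $P$, which \gsemo does not guarantee).

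The phase producing $0^n$ is also handled differently, and your version is incomplete in exactly the place you flag as an obstacle. A fitness-level argument on the number of $1$-bits does not apply, since the $\hat{g_1}$-minimal individual can change to one with more elements but smaller expected weight, and your proposed combination with ``levels counted in terms of $\hat{g_1}$-values'' is not a worked-out potential. The paper instead applies multiplicative drift directly to the minimal $\hat{g_1}$-value in the population: the single-bit flips of the current minimiser $x$ jointly remove total expected weight $\hat{g_1}(x)$, so the expected one-step decrease is at least $\hat{g_1}(x)/(P_{\max}en)$, and since $a_{\min}\le\hat{g_1}(x)\le na_{\max}$ for $x\ne 0^n$ this yields $O(P_{\max}n(\log n+\log(a_{\max}/a_{\min})))$ in one step. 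With these two repairs your outline coincides with the paper's proof.
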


\begin{proof}

The fitness function $\hat{g}$ uses the expected cost in $g_1$ which should be minimized. We can reuse the analysis from the proof of Lemma~\ref{lem:zerostring} and consider always the individual with the smallest cost and bound the time to obtain the search point $0^n$ by $O(P_{\max}n(\log n +\log (a_{\max}/a_{\min})$ using multiplicative drift analysis as in Lemma~\ref{lem:zerostring}.

After having included the search point $0^n$ in the population, we follow the analysis for subset selection with general deterministic cost constraints carried out in~\citet{DBLP:conf/ijcai/QianSYT17,DBLP:conf/aaai/BianF0020}.

 We denote by  $\hat{g_1}^*$, the maximal $\hat{g_1}$-value for which there is a solution $x \in P$ with  $\hat{g_1}(x) \leq \hat{g_1}^*$ and

$$g_2(X) \geq  \left(1- e^{-\hat{g_1}^*/C} \right) \cdot f(OPT).
$$

We use $\hat{g_1}^*$ to track the progress of the algorithm and it has been shown in~\citet{DBLP:conf/ijcai/QianSYT17}(proof of Theorem 2) that $\hat{g_1}^*$ does not decrease during the optimization process of \gsemo.
Note that $\left(1- e^{-\hat{g_1}^*/C} \right) \cdot f(OPT)$ is monotonically increasing in $\hat{g_1}^*$ and
that we have $\hat{g_1}^*\geq 0$ once the search point $0^n$ has been included in the population as $\hat{g}_1(0^n)=0$ and $g_2(0^n)=0$.

Choosing $x$ for mutation and flipping the $0$-bit of $x$ corresponding to the largest marginal gain in terms of $g_2/\hat{g_1}$ gives a solution $y$ for which

\begin{eqnarray*}
g_2(Y) & \geq &  f(X) + \frac{\hat{g_1}(Y)-\hat{g_1}(X)}{C} \cdot (f(OPT) - f(X))\\
& \geq &  f(X) + \frac{a_{\min}}{C} \cdot (f(OPT) - f(X))\\
& = &  \left(1- \frac{a_{\min}}{C}\right) \cdot f(X) + \frac{a_{\min}}{C} \cdot f(OPT)\\
& \geq &  \left(1- \frac{a_{\min}}{C}\right) \cdot \left(1- e^{-\hat{g_1}^*/C} \right) \cdot f(OPT) + \frac{a_{\min}}{C} \cdot f(OPT)\\
& = & \left(1-  \left(1-\frac{a_{\min}}{C}\right) \cdot \left(e^{-\hat{g_1}^*/C} \right) \right) \cdot f(OPT)\\
& \geq & \left(1-  e^{-a_{\min}/C} \cdot e^{-\hat{g_1}^*/C}\right)\cdot f(OPT)\\
& = & \left(1-  e^{-(\hat{g_1}^* + a_{\min})/C}\right)\cdot f(OPT)
\end{eqnarray*}

holds.
Here the first inequality follows from $f$ being monotone and submodular and the second inequality uses that $g_1$ increases by at least $a_{\min}$.
The last inequality holds as $1-s \leq e^{-s}$ for all $s \in \mathds{R}$.
The $\hat{g_1}^*$-value for the considered solution, can increase at most $C/a_{\min}$ and therefore once included the search point $0^n$, the expected time to produce such solution
is $O(P_{\max}nC/a_{\min})$.

Let $x^*$ be the feasible solution of maximal cost included in the population after having increased the $\hat{g_1}^*$ at most $C/a_{\min}$ times as described above. Furthermore, let $v^*$ be the element with the largest $g_2$-value not included in $x^*$ and $\hat{x}$ the solution containing the single element with the largest $g_2$ produced from the search point $0^n$ in expected time $O(P_{\max}n)$. 

Let $r$ be the number of elements in a given solution. According to Lemma 1 and 2 in~\citet{DBLP:journals/corr/abs-1911-11451} , the maximal $\hat{g_1}^*$ deemed as feasible is at least
$$
C_1^*=C- \sqrt{\frac{(1 - \alpha) r \delta^2}{3\alpha}} 
$$
when using Chebyshev's inequality (Equation~\ref{thm:CHB}) and at least
$$
C_2^*=C - \sqrt{3\delta r \ln(1/\alpha)}
$$
when using the Chernoff bound (Equation~\ref{thm:CHF}). Note that and $\alpha$ and $\delta$ are constants. Furthermore, we have $r \cdot a_{\min} \leq C = \Omega(r)$ and $C/a_{\max} = \omega(1)$ for any feasible solution which implies $\sqrt{\frac{(1 - \alpha) r \delta^2}{3\alpha}} /C = o(1)$ and  $\sqrt{3\delta r \ln(1/\alpha)} / C = o(1)$.

For a fixed $C^*$-value, we have 

$$
 \left(1-  e^{-C^*/C} \right) \cdot f(OPT)
$$
which is at least 
$(1-e^{-{1 +o(1)}}) \cdot f(OPT) = (1-o(1) (1- 1/e) \cdot f(OPT)$ for the values of $C_1^*$ and $C_2^*$.

We have $\hat{g_1}(x^*)+ a(v^*)>C_1^*$ when working with Chebyshev's inequality and  $\hat{g_1}(x^*)+ a(v^*)>C_2^*$ when using Chernoff bound. In addition, $f(\hat{x}) \geq f(v^*)$ holds. 
Hence,
 $x^*$ or $\hat{x}$ is therefore a $(1/2-o(1))(1-1/e)$-approximation which completes the proof.
\end{proof}

For the special case of uniform IID weights, we have $a=a_{\max}=a_{\min}$ and $P_{max}\leq C/a +1$ and $\hat{g_1}(x) = a \cdot |x|_1$ which means that this value is determined by the number of chosen elements. 
Then the algorithm behaves as when using $g_1$ which we already  investigated in  Theorem~\ref{thm:iid}. This gives an upper bound on the expected runtime of $O(nk (k+\log n))$ to obtain a $(1-o(1))(1-1/e)$-approximation for the uniform IID case when working with the function $\hat{g_1}$ instead of $g_1$.

\section{Experimental Setup}
\label{sec5}
We carry out experimental investigations for different chance-constrained variants of monotone submodular optimization problems and describe the experimental setting in this section. For our investigations, we consider the influence maximization problem in social networks and the maximum coverage problem. Our primary goal is to compare \gsemo to \ga and \gga for optimizing monotone chance constrained submodular problems as proposed in \citet{DBLP:journals/corr/abs-1911-11451}. We will show that \gsemo using the multi-objective formulations introduced in this paper significantly outperforms \ga and \gga in terms of the quality of the results obtained. Furthermore, we also consider prominent evolutionary multi-objective algorithms such as SPEA2 ~\citep{zitzler2001spea2} and NSGA-II~\citep{DBLP:journals/tec/DebAPM02} using the multi-objective formulations introduced in this paper and show when they allow for additional improvements compared to \gsemo.

\subsection{The Influence Maximization Problem}
The influence maximization problem (IM) (see~\citep{DBLP:journals/toc/KempeKT15,DBLP:conf/kdd/LeskovecKGFVG07,DBLP:conf/ijcai/QianSYT17,DBLP:conf/aaai/ZhangV16} for detailed descriptions) is a key problem in social influence analysis and aims to find a set of most influential users in a large-scale social network. The task for IM is to select a set of users that maximize the spread of influence through a given social network, i.e., a graph of interactions and relationships within a group of users~\citep{chen2009efficient,DBLP:conf/kdd/KempeKT03}. The problem of influence maximization has been studied subject to deterministic constraints where selecting a set of users is associated with some costs and the selection of users has to be within a given budget~\citep{DBLP:conf/ijcai/QianSYT17}.

Formally, a social network is modeled as a directed graph $G=(V,E)$ where each node represents a user, and each edge $(u,v) \in E$ represents the possibility that users $u$ influence user $v$. Each edge $(u,v) \in E$ is assigned an edge probability $p_{u,v}$ which is the probability that user $u$ influences user $v$. The aim of the IM problem is to find a subset $X \subseteq V$ such that the expected number of activated nodes $E[I(X)]$ of $X$ is maximized. Given a cost function $c \colon V\rightarrow \R^+$ and a budget $C\ge 0$, the corresponding submodular optimization problem under chance constraints is given as
$$\argmax_{X\subseteq V} E[I(X)] \text{ s.t. } \Pr[c(X)> C]\leq \alpha.$$ 

Note that the cost of a selection $X$ is stochastic in this chance-constrained setting.
For influence maximization, we consider uniform cost constraints where each node has expected cost $1$. The expected cost of a solution is therefore $E_W(X)= |X|$.

In order to evaluate the algorithms on the chance-constrained influence maximization problem, we use the real-world data set with $400$ nodes and $1\,594$ edges provided in~\citet{DBLP:conf/ijcai/QianSYT17,digg2009}.

\subsection{The Maximum Coverage Problem}
\label{MCP}

The maximum coverage problem~\citep{DBLP:journals/ipl/KhullerMN99,DBLP:journals/jacm/Feige98} is an important NP-hard submodular optimization problem. We consider the chance-constrained version of the problem.
Given a set $U$ of elements, a collection $V = \{S_1,S_2,\ldots,S_n\}$ of subsets of $U$, a cost function $c \colon 2^V\rightarrow \R^+$
and a budget $C$, the goal is to find
$$\argmax_{X  \subseteq  V}  \{f(X) = |\cup_{S_i \in X} S_i| \text{ s.t. } \Pr(c(X) > C) \leq \alpha\}.$$

We consider linear cost functions. For the uniform case each set $S_i$ has expected cost $1$ and we have $E_W(X) = |\{i \mid S_i \in X\}|$. 
For the experiments with uniformly distributed weights having the same dispersion, each set $S_i$ has expected cost $|S_i|$ and we use $E_W(X)=\sum_{S_i \in X} |S_i|$. 

For our experiments, we investigate maximum coverage instances based on graphs. The $U$ elements consist of the vertices of the graph and
for each vertex, we generate a set which contains the vertex itself and its adjacent vertices. Hence, the expected cost of a node $v$ identifying a particular set is $a(v)=1 + deg(v)$ where $deg(v)$ is the degree of $v$ in the given graph.
For chance-constrained maximum coverage problem, we use the graphs frb30-15-01 ($450$ nodes, $17\,827$ edges) and frb35-17-01 ($595$ nodes, $27\,856$ edges) from~\citet{datasetsfrb}.

\subsection{Stochastic Settings}

We now describe experimental settings for the investigated evolutionary multi-objective algorithms.
In addition to \gsemo~\citep{Giel2003}, we also consider the application of state of the art evolutionary multi-objective algorithms, namely \nsga~\citep{DBLP:journals/tec/DebAPM02} for our chance-constrained problems. For each evolutionary algorithm run we allow $5\,000\,000$ fitness evaluations.
We run \nsga with parent population size $20$, offspring population size $10$, 
crossover probability $0.90$ and standard bit mutation for $500\,000$ generations.
Note that this implies that the parent population size of NSGA-II is smaller than the one of \gsemo if many trade-offs according to the given objective functions are possible. This is in particular the case when the constraint bound is large and many different cost values are possible for feasible solutions.
However, as the crowding distance focuses on the best value according to each objective, the best solution according to the given constrained single-objective problem is maintained and improved using the multi-objective formulation. 
To ensure a robust search behavior of NSGA-II and to guarantee a fair comparison among the algorithms, we performed experimental testing using various parameter combinations~\citep{DBLP:journals/tec/DebAPM02,wang2019parameterization,ferreira2023nsga}.

Our goal is to study different chance constraint settings in terms of the constraint bound $C$, the dispersion $\delta$, and the probability bound $\alpha$. We consider different benchmarks for chance-constrained versions of the maximum influence problems and the maximum coverage problem.
For each benchmark set, we study the performance in terms of the quality of solutions obtained by the  algorithms for different constraint bounds. We consider $C = 20, 50, 100$ for influence maximization, and $C = 10, 15, 20$ for maximum coverage problem, and  $C = 500$ for maximum coverage problem with degree based constraint.
For the experimental investigations of the algorithms and problems we consider all combinations of $\alpha = 0.1, 0.001$, and $\delta = 0.5, 1.0$ to gain a good understanding of the algorithm's performance for different chance-constrained settings.
In the case of the maximum coverage problem with degree based constraint we consider $\alpha = 0.1, 0.001$, and $\delta = 1.0, 10, 20, 30, 40, 50$.
It has been shown that Chebyshev's inequality leads to better results when $\alpha$ is relatively large and the Chernoff bounds gives better results for small $\alpha$~\citep{DBLP:conf/gecco/XieHAN019,DBLP:journals/corr/abs-1911-11451}. Therefore, we use Equation~\ref{thm:CHB} for $\alpha=0.1$ and Equation~\ref{thm:CHF} for $\alpha=0.001$ when computing the upper bound on the probability of a constraint violation. For each tested instance, we repeat the run $30$ times independently and report the minimum, maximum, average results and statistical tests.

In order to test statistical significance of the results we use the Kruskal-Wallis test  $95$\% confidence level in order to measure the statistical validity of our results. We apply the Bonferroni post-hoc statistical procedure that is used for multiple comparison of a control algorithm to two or more algorithms~\citep{Corder09}. $X^{(+)}$ is equivalent to the statement that the algorithm in the column outperformed algorithm $X$. $X^{(-)}$ is equivalent to the statement that X outperformed the algorithm given in the column. In the case when the algorithm $X$ does not appear, this means that no significant difference was determined between algorithms.

\begin{table}[!t]
\centering
\renewcommand{\arraystretch}{1.6} 
\renewcommand\tabcolsep{1.8pt} 
\caption{Comparison of results for influence maximization with uniform chance constraints in terms of the mean and statistical test of fitness values for \gsemo (2) and \nsga (3) on real-world data set (rows 1-12). Highest fitness values between \ga (1), \gsemo (2), and \nsga (3) are highlighted in {\textbf{bold face}}.
}
\label{tb:400Cheb}
\vspace{+4mm}
\begin{scriptsize} 

\begin{tabular}{@{}cccr|rrrrc|rrrrc}
\toprule  
     
\multirow{2}{*}{$C$} & \multirow{2}{*}{$\alpha$} &
\multirow{2}{*}{$\delta$} & \multicolumn{1}{c}{\bfseries GA (1)} & \multicolumn{5}{c}{\bfseries \gsemo (2)} & \multicolumn{5}{c}{\bfseries NSGA-II (3)}\\
\cmidrule(l{2pt}r{2pt}){5-9} \cmidrule(l{2pt}r{2pt}){10-14} 
 &  &  &  & \textbf{mean} & \textbf{min}
&\textbf{max} & \textbf{std} & \textbf{stat} &\textbf{mean} &\textbf{min} & \textbf{max}
 & \textbf{std}& \textbf{stat} \\
\midrule

\multirow{2}{*}{20}&0.1&0.5&51.51&\textbf{55.75}&54.44&56.85&0.5571&$1^{(+)}$&55.66&54.06&56.47 &0.5661&$1^{(+)}$\\    
&0.1&1.0&46.80&\textbf{50.65}&49.53&51.68&0.5704&$1^{(+)}$&50.54&49.61&52.01&0.6494&$1^{(+)}$\\

\multirow{2}{*}{50}&0.1&0.5&90.55&\textbf{94.54}&93.41&95.61&0.5390&$1^{(+)},3^{(+)}$&92.90&90.75&94.82&1.0445&$1^{(+)},2^{(-)}$\\    
&0.1&1.0&85.71&\textbf{88.63}&86.66&90.68&0.9010&$1^{(+)},3^{(+)}$&86.89&85.79&88.83&0.8479&$1^{(+)},2^{(-)}$\\

\multirow{2}{*}{100}&0.1&0.5&144.16&\textbf{147.28}&145.94&149.33&0.8830&$1^{(+)},3^{(+)}$&144.17&142.37&146.18&0.9902&$2^{(-)}$\\    
&0.1&1.0&135.61&\textbf{140.02}&138.65&142.52&0.7362&$1^{(+)},3^{(+)}$&136.58&134.80&138.21&0.9813&$2^{(-)}$\\
\midrule

\multirow{2}{*}{20}&0.001&0.5&48.19&\textbf{50.64}&49.10&51.74&0.6765&$1^{(+)}$&50.33&49.16&51.25&0.5762&$1^{(+)}$\\    
&0.001&1.0&39.50&\textbf{44.53}&43.63&45.55&0.4687&$1^{(+)}$&44.06&42.18&45.39&0.7846&$1^{(+)}$

\\

\multirow{2}{*}{50}&0.001&0.5&75.71&\textbf{80.65}&78.92&82.19&0.7731&$1^{(+)}$&80.58&79.29&81.63 &0.6167&$1^{(+)}$\\    
&0.001&1.0&64.49&69.79&68.89&71.74&0.6063&$1^{(+)}$&\textbf{69.96}&68.90&71.05 &0.6192&$1^{(+)}$\\

\multirow{2}{*}{100}&0.001&0.5&116.05&\textbf{130.19}&128.59&131.51&0.7389&$1^{(+)},3^{(+)}$&127.50&125.38&129.74&0.9257&$1^{(+)},2^{(-)}$\\    
&0.001&1.0&96.18&\textbf{108.95}&107.26&109.93&0.6466&$1^{(+)},3^{(+)}$&107.91&106.67&110.17 &0.7928&$1^{(+)},2^{(-)}$\\
 \midrule

\end{tabular}

\end{scriptsize}
\end{table}    
\begin{table}[!t]
\centering
\renewcommand{\arraystretch}{1.6}  
\renewcommand\tabcolsep{1.8pt} 
\caption{Comparison of results for maximum coverage with uniform chance constraints in terms of the mean and statistical test of fitness values for \gsemo (2), and \nsga (3) for the graphs frb30-15-01 (rows 1-12) and frb35-17-01 dataset (rows 13-24). Highest fitness values between \ga (1), \gsemo (2), and \nsga (3) are highlighted in {\textbf{bold face}}.
} 
\label{tb:MCP1}
\vspace{+4mm}
\begin{scriptsize}

\begin{tabular}{@{}cccc|rrrrc|rrrrc}

\toprule
     
\multirow{2}{*}{$C$} & \multirow{2}{*}{$\alpha$} &
\multirow{2}{*}{$\delta$} & \multicolumn{1}{c}{\bfseries GA (1)} & \multicolumn{5}{c}{\bfseries \gsemo (2)} & \multicolumn{5}{c}{\bfseries NSGA-II (3)}\\
\cmidrule(l{2pt}r{2pt}){5-9} \cmidrule(l{2pt}r{2pt}){10-14} 
 &  &  &  & \textbf{mean} & \textbf{min}
&\textbf{max} & \textbf{std}& \textbf{stat} & \textbf{mean} &\textbf{min} & \textbf{max}
 & \textbf{std} & \textbf{stat} \\
\midrule 

\multirow{2}{*}{10}&0.1&0.5&371.00&\textbf{377.23}&371.00&379.00&1.8323&$1^{(+)}$&376.00&371.00&379.00&2.5596&$1^{(+)}$\\    
&0.1&1.0&321.00&\textbf{321.80}&321.00&325.00&1.5625&$1^{(+)}$& 321.47&321.00& 325.00&1.2521\\

\multirow{2}{*}{15}&0.1&0.5&431.00&\textbf{439.60}&435.00&442.00&1.7340&$1^{(+)},3^{(+)}$&437.57&434.00&441.00&1.7555&$1^{(+)},2^{(-)}$ \\    
&0.1&1.0&403.00&\textbf{411.57}&408.00&414.00&1.7750&$1^{(+)}$&410.67 &404.00&414.00&2.5098&$1^{(+)}$ \\

\multirow{2}{*}{20}&0.1&0.5&446.00&\textbf{450.07}&448.00&451.00&0.8277&$1^{(+)},3^{(+)}$&448.27&445.00&451.00&1.3113&$1^{(+)},2^{(-)}$ \\    
&0.1&1.0&437.00&\textbf{443.87}&441.00&446.00&1.2794&$1^{(+)},3^{(+)}$&441.37&438.00 &444.00 &1.6914&$1^{(+)},2^{(-)}$ \\
\midrule 

\multirow{2}{*}{10}&0.001&0.5&348.00&\textbf{352.17}&348.00&355.00&2.4081&$1^{(+)}$&350.80&348.00&355.00&2.8935&$1^{(+)}$\\    
&0.001&1.0&321.00&\textbf{321.67}&321.00&325.00&1.5162&$1^{(+)}$&321.33 &321.00&325.00&1.0613\\

\multirow{2}{*}{15}&0.001&0.5&414.00&\textbf{423.90}&416.00&426.00&2.4824&$1^{(+)}$&422.67& 419.00&426.00&2.2489&$1^{(+)}$\\    
&0.001&1.0&371.00&\textbf{376.77}&371.00&379.00&1.8134&$1^{(+)}$&376.33& 371.00 & 379.00&2.6824&$1^{(+)}$\\

\multirow{2}{*}{20}&0.001&0.5&437.00&\textbf{443.53}&440.00&445.00&1.1958&$1^{(+)},3^{(+)}$&440.23&437.00&443.00&1.6955&$1^{(+)},2^{(-)}$\\    
&0.001&1.0&414.00&\textbf{424.00}&420.00&426.00&1.7221&$1^{(+)}$&422.50&417.00&426.00&2.5291&$1^{(+)}$\\
\midrule

\midrule

\multirow{2}{*}{10}&0.1&0.5&448.00&\textbf{458.80}&451.00&461.00&3.3156&$1^{(+)}$&457.97&449.00&461.00&4.1480&$1^{(+)}$\\    
&0.1&1.0&376.00&\textbf{383.33}&379.00&384.00&1.7555&$1^{(+)}$&382.90&379.00&384.00&2.0060&$1^{(+)}$ \\

\multirow{2}{*}{15}&0.1&0.5&559.00&\textbf{559.33}&555.00&562.00&2.0057&$3^{(+)}$&557.23&551.00 &561.00&2.4309&$1^{(-)},2^{(-)}$\\    
&0.1&1.0&503.00&\textbf{507.80}&503.00&509.00&1.1567&$1^{(+)}$&507.23&502.00&509.00 &1.8323&$1^{(+)}$\\

\multirow{2}{*}{20}&0.1&0.5&587.00&\textbf{587.20}&585.00&589.00&1.2149&$3^{(+)}$&583.90&580.00&588.00&1.9360&$1^{(-)},2^{(-)}$\\    
&0.1&1.0&569.00&\textbf{569.13}&566.00&572.00&1.4559&$3^{(+)}$&565.30&560.00&569.00&2.1520&$1^{(-)},2^{(-)}$\\
\midrule 

\multirow{2}{*}{10}&0.001&0.5&413.00&\textbf{423.67}&418.00&425.00&1.8815&$1^{(+)}$&422.27&416.00&425.00&2.6121&$1^{(+)}$\\    
&0.001&1.0&376.00&\textbf{383.70}&\textbf379.00&384.00&1.1492&$1^{(+)}$&381.73&377.00&384.00&2.6514 &$1^{(+)}$\\

\multirow{2}{*}{15}&0.001&0.5&526.00&\textbf{527.97}&525.00&532.00&2.1573&$1^{(+)}$&527.30&520.00&532.00&2.7436\\    
&0.001&1.0&448.00&\textbf{458.87}&453.00&461.00&2.9564&$1^{(+)}$&457.10&449.00  &461.00 &4.1469&$1^{(+)}$\\

\multirow{2}{*}{20}&0.001&0.5&568.00&\textbf{568.87}&565.00&572.00&1.5025&$3^{(+)}$&564.60&560.00&570.00&2.7618&$1^{(-)}$,$2^{(-)}$\\    
&0.001&1.0&526.00&\textbf{528.03}&525.00&530.00&1.8843&$1^{(+)}$&527.07&522.00&530.00&2.2427\\
\midrule
 \end{tabular}
\end{scriptsize}
\end{table}
\vspace{-0.4cm}
\begin{table*}[t!]
\renewcommand{\arraystretch}{1.4} 
\renewcommand\tabcolsep{6.4pt}
\centering

\caption{
Results in terms of the mean value for maximum coverage with degree-based chance constraints for \gga (1), \gsemo $g$ (2) and $\hat{g}$ (3) on graphs frb30-15-01 (rows 1-12) and frb35-17-01 dataset (rows 13-24). Highest mean fitness values are highlighted in {\textbf{bold face}}.
Statistical tests also include comparison to \nsga using $g$ (4) and $\hat{g}$ (5) and SPEA2 using $g$ (6) and $\hat{g}$ (7)  (see Table~\ref{tb:MCPoutdegree_SPEA} for mean values of results.)
}
\label{tb:MCPoutdegree}
\vspace{+4mm}
\begin{scriptsize}
\begin{tabular}{@{}ccrc|cccc}

    \toprule
     
\multirow{2}{*}{$C$} & \multirow{2}{*}{$\alpha$} &
\multirow{2}{*}{$\delta$} & \multicolumn{1}{c}{\bfseries \gga (1)} & \multicolumn{4}{c}{\bfseries \gsemo} \\
\cmidrule(l{2pt}r{2pt}){5-8} 
\cmidrule(l{2pt}r{2pt}){4-4}
  &  & &  & \textbf{$g~(2)$} & \textbf{$stat$}  & \textbf{$\hat{g}~(3)$ } & \textbf{$stat$} 
 \\
\midrule

\multirow{6}{*}{500}&0.1&1&360&\textbf{363.77}&$1^{(+)},4^{(-)},5^{(-)}, 6^{(-)},7^{(-)}$&363.20&$1^{(+)},4^{(-)},5^{(-)},6^{(-)},7^{(-)}$ \\

&0.1&10&346&\textbf{352.90}&$1^{(+)}, 4^{(-)},5^{(-)},6^{(-)},7^{(-)}$&351.07&$1^{(+)}, 4^{(-)},5^{(-)},6^{(-)},7^{(-)}$ \\

&0.1&20&331&\textbf{338.33}&$1^{(+)},4^{(-)},5^{(-)},6^{(-)},7^{(-)}$&336.23&$1^{(+)},4^{(-)},5^{(-)},6^{(-)},7^{(-)}$ \\

&0.1&30&316&\textbf{322.00}&$1^{(+)},4^{(-)},6^{(-)},7^{(-)}$&321.83&$1^{(+)},4^{(-)},6^{(-)},7^{(-)}$ \\

&0.1&40&289&\textbf{307.37}&$1^{(+)},6^{(-)}$&306.53&$1^{(+)},6^{(-)}$ \\

&0.1&42&282&\textbf{302.37}&$1^{(+)},4^{(-)},5^{(-)}, 6^{(-)},7^{(-)}$&302.13&$1^{(+)},4^{(-)},5^{(-)},6^{(-)},7^{(-)}$ \\ 
\midrule

\multirow{6}{*}{500}&0.001&1&362&\textbf{363.17}&$1^{(+)},4^{(-)},5^{(-)},6^{(-)},7^{(-)}$&362.93&$1^{(+)},4^{(-)},5^{(-)},6^{(-)},7^{(-)}$ \\

&0.001&10&337&\textbf{348.30}&$1^{(+)},4^{(-)},5^{(-)}$&347.73&$1^{(+)},4^{(-)},5^{(-)}$ \\

&0.001&20&316&\textbf{327.03}&$1^{(+)},4^{(-)},5^{(-)},6^{(-)},7^{(-)}$&327.00&$1^{(+)},4^{(-)},5^{(-)},6^{(-)},7^{(-)}$ \\

&0.001&30&282&315.63&$1^{(+)}$&\textbf{315.77}&$1^{(+)}$ \\

&0.001&40&259&\textbf{295.20}&$1^{(+)},5^{(-)},6^{(-)},7^{(-)}$&294.93&$1^{(+)},4^{(-)},5^{(-)},6^{(-)},7^{(-)}$ \\

&0.001&42&259&\textbf{289.33}&$1^{(+)},4^{(-)},5^{(-)},6^{(-)},7^{(-)}$&289.13&$1^{(+)},4^{(-)},5^{(-)},6^{(-)},7^{(-)}$\\

\midrule
\midrule

\multirow{6}{*}{500}&0.1&1&402&\textbf{406.40}&$1^{(+)},4^{(-)},5^{(-)},6^{(-)},7^{(-)}$&406.67&$1^{(+)},4^{(-)},5^{(-)},6^{(-)},7^{(-)}$ \\

&0.1&10&375&\textbf{386.87}&$1^{(+)},4^{(-)},5^{(-)},6^{(-)},7^{(-)}$&385.07&$1^{(+)},4^{(-)},5^{(-)},6^{(-)},7^{(-)}$ \\

&0.1&20&354&\textbf{370.13}&$1^{(+)},4^{(-)},5^{(-)},6^{(-)},7^{(-)}$&369.80&$1^{(+)},4^{(-)},5^{(-)},6^{(-)},7^{(-)}$ \\

&0.1&30&335&\textbf{351.70}&$1^{(+)},4^{(-)},6^{(-)},7^{(-)}$&349.23&$1^{(+)},4^{(-)},5^{(-)},6^{(-)},7^{(-)}$\\

&0.1&40&302&\textbf{328.20}&$1^{(+)},4^{(-)},5^{(-)},6^{(-)},7^{(-)}$&327.57&$1^{(+)},4^{(-)},5^{(-)},6^{(-)},7^{(-)}$\\

&0.1&50&252&\textbf{316.97}&$1^{(+)},6^{(-)}$&316.60& $1^{(+)},6^{(-)}$\\

\midrule

\multirow{6}{*}{500}&0.001&1&397&\textbf{405.70}&$1^{(+)},4^{(-)},5^{(-)},6^{(-)},7^{(-)}$&405.40&$1^{(+)},4^{(-)},5^{(-)},6^{(-)},7^{(-)}$\\

&0.001&10&354&\textbf{384.50}&$1^{(+)},4^{(-)},5^{(-)},6^{(-)},7^{(-)}$&384.37&$1^{(+)},4^{(-)},5^{(-)},6^{(-)},7^{(-)}$\\

&0.001&20&347&\textbf{364.37}&$1^{(+)},6^{(-)},7^{(-)}$&363.83&$1^{(+)},6^{(-)},7^{(-)}$\\

&0.001&30&302&\textbf{338.70}&$1^{(+)},4^{(-)},5^{(-)},6^{(-)},7^{(-)}$&338.00&$1^{(+)},4^{(-)},5^{(-)},6^{(-)},7^{(-)}$\\

&0.001&40&252&\textbf{323.83}&$1^{(+)}$&323.73&$1^{(+)}$\\

&0.001&50&252&\textbf{317.00}&$1^{(+)},6^{(-)},7^{(-)}$&316.83&$1^{(+)},6^{(-)},7^{(-)}$\\

\midrule 

\end{tabular}
\end{scriptsize}
\end{table*}
\begin{sidewaystable}
\centering
\tiny

\renewcommand{\arraystretch}{1.6} 
\renewcommand\tabcolsep{5.8pt} 
\centering

\caption{
Results in terms of the mean value for maximum coverage with degree-based chance constraints for \nsga using $g$ (4), $\hat{g}$ (5) and SPEA2 using $g$ (6) and $\hat{g}$ (7) on graphs frb30-15-01 (rows 1-12) and frb35-17-01 dataset (rows 13-24). Highest fitness values in comparison to results from Table~\ref{tb:MCPoutdegree} are highlighted in \colorbox{gray!20} {color} and highest fitness values between \nsga and \spea are highlighted in {\textbf{bold face}}. Statistical tests also include comparison to \gga (1) and \gsemo using $g$ (2) and $\hat{g}$ (3) (see Table~\ref{tb:MCPoutdegree} for mean values of results.)
}

\label{tb:MCPoutdegree_SPEA}
\begin{tiny}
\begin{tabular}{@{}ccc|cccc|cccc}

    \toprule
     
\multirow{2}{*}{$C$} & \multirow{2}{*}{$\alpha$} &
\multirow{2}{*}{$\delta$}  & \multicolumn{4}{c}{\bfseries NSGA-II} & \multicolumn{4}{c}{\bfseries SPEA2}\\
\cmidrule(l{2pt}r{2pt}){4-7} 
\cmidrule(l{2pt}r{2pt}){8-11} 
 &  &   
  & \textbf{$g~(4)$} & \textbf{$stat$}  &\textbf{$\hat{g}~(5)$ } & \textbf{$stat$} & \textbf{$g~(6)$} & \textbf{$stat$}  &\textbf{$\hat{g}~(7)$ } & \textbf{$stat$}
 \\
\midrule

\multirow{6}{*}{500}&0.1&1&375.90&$1^{(+)},2^{(+)},3^{(+)},6^{(-)}, 7^{(-)}$&375.70&$1^{(+)},2^{(+)},3^{(+)},6^{(-)}, 7^{(-)}$&380.57&$1^{(+)},2^{(+)},3^{(+)},4^{(+)}, 5^{(+)}$&\cellcolor{gray!20}\textbf{380.63}&$1^{(+)},2^{(+)},3^{(+)},4^{(+)}, 5^{(+)}$\\    

&0.1&10&362.00&$1^{(+)},2^{(+)},3^{(+)},6^{(-)}, 7^{(-)}$ &360.47&$1^{(+)},2^{(+)},3^{(+)},6^{(-)}, 7^{(-)}$&365.03&$1^{(+)},2^{(+)},3^{(+)},4^{(+)}, 5^{(+)}$&\cellcolor{gray!20}\textbf{366.50}&$1^{(+)},2^{(+)},3^{(+)},4^{(+)}, 5^{(+)}$\\ 

&0.1&20&342.47&$1^{(+)},2^{(+)},3^{(+)},6^{(-)}, 7^{(-)}$ &342.37&$1^{(+)},2^{(+)},3^{(+)},6^{(-)}, 7^{(-)}$&346.57&$1^{(+)},2^{(+)},3^{(+)},4^{(+)}, 5^{(+)}$&\cellcolor{gray!20}\textbf{346.77}&$1^{(+)},2^{(+)},3^{(+)},4^{(+)}, 5^{(+)}$\\ 

&0.1&30&324.60&$1^{(+)},2^{(+)},3^{(+)},6^{(-)}$ &323.57&$1^{(+)},6^{(-)},7^{(-)}$&\cellcolor{gray!20}\textbf{327.37}&$1^{(+)},2^{(+)},3^{(+)},4^{(+)}, 5^{(+)}$&326.93&$1^{(+)},2^{(+)},3^{(+)}, 5^{(+)}$ \\ 

&0.1&40&308.60&$1^{(+)}$ &306.37&$1^{(+)},6^{(-)}$&\cellcolor{gray!20}\textbf{310.47}&$1^{(+)},2^{(+)},3^{(+)},5^{(+)}$&309.14&$1^{(+)},2^{(+)},3^{(+)}$\\ 

&0.1&42&305.43&$1^{(+)},2^{(+)},3^{(+)},6^{(-)}, 7^{(-)}$ &305.00&$1^{(+)},2^{(+)},3^{(+)},6^{(-)}, 7^{(-)}$&\cellcolor{gray!20}\textbf{307.73}&$1^{(+)},2^{(+)},3^{(+)},4^{(+)}, 5^{(+)}$&307.14&$1^{(+)},2^{(+)},3^{(+)},4^{(+)}, 5^{(+)}$ \\ 
\midrule

\multirow{6}{*}{500}&0.001&1&376.13&$1^{(+)},2^{(+)},3^{(+)},6^{(-)},7^{(-)}$ &374.77&$1^{(+)},2^{(+)},3^{(+)},6^{(-)}, 7^{(-)}$&\cellcolor{gray!20}\textbf{380.57}&$1^{(+)},2^{(+)},3^{(+)},4^{(+)}, 5^{(+)}$&379.80&$1^{(+)},2^{(+)},3^{(+)},4^{(+)}, 5^{(+)}$ \\ 

&0.001&10&354.16&$1^{(+)},2^{(+)},3^{(+)},6^{(-)}, 7^{(-)}$ &352.80&$1^{(+)},2^{(+)},3^{(+)},6^{(-)},7^{(-)}$&357.00&$1^{(+)},2^{(+)},3^{(+)},4^{(+)}, 5^{(+)}$&\cellcolor{gray!20}\textbf{357.33}&$1^{(+)},2^{(+)},3^{(+)},4^{(+)}, 5^{(+)}$\\ 

&0.001&20&331.83&$1^{(+)},2^{(+)},3^{(+)},6^{(-)}, 7^{(-)}$ &331.20&$1^{(+)},2^{(+)},3^{(+)},6^{(-)}, 7^{(-)}$&334.33&$1^{(+)},2^{(+)},3^{(+)},4^{(+)}, 5^{(+)}$&\cellcolor{gray!20}\textbf{334.90}&$1^{(+)},2^{(+)},3^{(+)},4^{(+)}, 5^{(+)}$ \\ 

&0.001&30&313.57&$1^{(+)}$ &312.47&$1^{(+)}$&315.50&$1^{(+)}$&\cellcolor{gray!20}\textbf{315.83}&$1^{(+)}$ \\

&0.001&40&297.73&$1^{(+)},2^{(+)},6^{(-)}, 7^{(-)}$ &297.70&$1^{(+)},2^{(+)},3^{(+)},6^{(-)}, 7^{(-)}$&\cellcolor{gray!20}\textbf{300.97}&$1^{(+)},2^{(+)},3^{(+)},4^{(+)}, 5^{(+)}$&300.63&$1^{(+)},2^{(+)},3^{(+)},4^{(+)}, 5^{(+)}$ \\ 

&0.001&42&292.97&$1^{(+)},2^{(+)},3^{(+)},6^{(-)}, 7^{(-)}$&292.93&$1^{(+)},2^{(+)},3^{(+)},6^{(-)}, 7^{(-)}$&295.73&$1^{(+)},2^{(+)},3^{(+)},4^{(+)},5^{(+)}$&\cellcolor{gray!20}\textbf{296.40}&$1^{(+)},2^{(+)},3^{(+)},4^{(+)}, 5^{(+)}$\\ 

\midrule
\midrule

\multirow{6}{*}{500}&0.1&1&421.57&$1^{(+)},2^{(+)},3^{(+)},6^{(-)}, 7^{(-)}$ &420.53&$1^{(+)},2^{(+)},3^{(+)},6^{(-)}, 7^{(-)}$&\cellcolor{gray!20}\textbf{427.83}&$1^{(+)},2^{(+)},3^{(+)},4^{(+)}, 5^{(+)}$&427.50&$1^{(+)},2^{(+)},3^{(+)},4^{(+)}, 5^{(+)}$ \\

&0.1&10&401.77&$1^{(+)},2^{(+)},3^{(+)},6^{(-)}, 7^{(-)}$ &399.70&$1^{(+)},2^{(+)},3^{(+)},6^{(-)}, 7^{(-)}$&407.57&$1^{(+)},2^{(+)},3^{(+)},4^{(+)}, 5^{(+)}$&\cellcolor{gray!20}\textbf{409.13}&$1^{(+)},2^{(+)},3^{(+)},4^{(+)}, 5^{(+)}$ \\

&0.1&20&376.47&$1^{(+)},2^{(+)},3^{(+)},6^{(-)}, 7^{(-)}$ &375.93&$1^{(+)},2^{(+)},3^{(+)},6^{(-)}, 7^{(-)}$&380.37&$1^{(+)},2^{(+)},3^{(+)},4^{(+)}, 5^{(+)}$&\cellcolor{gray!20}\textbf{381.03}&$1^{(+)},2^{(+)},3^{(+)},4^{(+)}, 5^{(+)}$ \\

&0.1&30&355.47&$1^{(+)},2^{(+)},3^{(+)},6^{(-)}, 7^{(-)}$  &354.10&$1^{(+)},3^{(+)},6^{(-)}, 7^{(-)}$&\cellcolor{gray!20}\textbf{358.87}&$1^{(+)},2^{(+)},3^{(+)},4^{(+)}, 5^{(+)}$&358.33&$1^{(+)},2^{(+)},3^{(+)},4^{(+)}, 5^{(+)}$\\

&0.1&40&333.73&$1^{(+)},2^{(+)},3^{(+)}$ &332.37&$1^{(+)},2^{(+)},3^{(+)},6^{(-)}, 7^{(-)}$&335.47&$1^{(+)},2^{(+)},3^{(+)}, 5^{(+)}$&\cellcolor{gray!20}\textbf{335.53}&$1^{(+)},2^{(+)},3^{(+)},5^{(+)}$ \\

&0.1&50&317.47&$1^{(+)},6^{(-)}$ &317.13&$1^{(+)},6^{(-)}$&\cellcolor{gray!20}\textbf{321.27}&$1^{(+)},2^{(+)},3^{(+)},4^{(+)}, 5^{(+)},7^{(+)}$&318.03&$1^{(+)},2^{(+)},3^{(+)},6^{(-)}$\\

\midrule 

\multirow{6}{*}{500}&0.001&1&425.86&$1^{(+)},2^{(+)},3^{(+)},6^{(-)}, 7^{(-)}$ &425.10&$1^{(+)},2^{(+)},3^{(+)},6^{(-)}, 7^{(-)}$&427.37&$1^{(+)},2^{(+)},3^{(+)},4^{(+)}, 5^{(+)}$&\cellcolor{gray!20}\textbf{427.47}&$1^{(+)},2^{(+)},3^{(+)},4^{(+)}, 5^{(+)}$ \\

&0.001&10&394.20&$1^{(+)},2^{(+)},3^{(+)},6^{(-)}, 7^{(-)}$ &392.43&$1^{(+)},2^{(+)},3^{(+)},6^{(-)}, 7^{(-)}$&398.27&$1^{(+)},2^{(+)},3^{(+)},4^{(+)}, 5^{(+)}$&\cellcolor{gray!20}\textbf{398.70}&$1^{(+)},2^{(+)},3^{(+)},4^{(+)}, 5^{(+)}$ \\ 

&0.001&20&367.03&$1^{(+)},6^{(-)}, 7^{(-)}$ &365.77&$1^{(+)},6^{(-)}, 7^{(-)}$&368.90&$1^{(+)},2^{(+)},3^{(+)},4^{(+)}, 5^{(+)}$&\cellcolor{gray!20}\textbf{369.03}&$1^{(+)},2^{(+)},3^{(+)},4^{(+)}, 5^{(+)}$ \\

&0.001&30&345.30&$1^{(+)},2^{(+)},3^{(+)}$ &345.10&$1^{(+)},2^{(+)},3^{(+)}$&\cellcolor{gray!20}\textbf{347.47}&$1^{(+)},2^{(+)},3^{(+)}$&346.87&$1^{(+)},2^{(+)},3^{(+)}$ \\

&0.001&40&324.07&$1^{(+)}$ &323.73&$1^{(+)}$&325.73&$1^{(+)}$&\cellcolor{gray!20}\textbf{325.90}&$1^{(+)}$ \\

&0.001&50&317.87&$1^{(+)},6^{(-)}, 7^{(-)}$ &317.43&$1^{(+)},6^{(-)}, 7^{(-)}$&\cellcolor{gray!20}\textbf{322.00}&$1^{(+)},2^{(+)},3^{(+)},4^{(+)}, 5^{(+)}$&321.43&$1^{(+)},2^{(+)},3^{(+)},4^{(+)}, 5^{(+)}$\\
\midrule

\end{tabular}
\end{tiny}
\end{sidewaystable}

\section{Experimental Results for Uniform IID Weights} 
\label{sec6}

The first type of cost constraints we consider is given by uniform independent, identically distributed (IID) weights. For our experiments, all items have IID weights $W(s) \in [1-\delta,1+\delta]$ ($\delta \le 1$). 

We consider the results for \ga, \gsemo and \nsga based on Chebyshev's inequality and Chernoff bounds. Table~\ref{tb:400Cheb} shows the results for the influence maximization problem obtained by \ga, \gsemo and \nsga for all combinations of $\alpha$ and $\delta$. The results show that \gsemo obtains the highest mean values compared to the results obtained by \ga and \nsga. Furthermore, the statistical tests show that for most of the combinations of $\alpha$ and $\delta$, \gsemo and \nsga significantly outperform \ga. The solutions obtained by \gsemo have significantly better performance than \nsga in the case of a high budget, i.e., for $C$ = $100$. A possible explanation for this is that the relatively small population size of $20$ for \nsga does not allow to construct solutions in a greedy fashion as it is possible for \ga and \gsemo. Note that the crowding distance within NSGA-II creates larger gaps in the objective space when the population size is not sufficient to cover the whole Pareto front.

Next, we perform experimental investigations for the maximum coverage problem
with uniform chance constraints for the graphs frb30-15-01 and frb35-17-01. The experimental results are shown in Table~\ref{tb:MCP1}. We compare the results for the maximum coverage problem of \ga, \gsemo and \nsga based on Chebyshev's inequality and Chernoff bound. Similar to the previous investigations, we observe that \gsemo based on Chebyshev and Chernoff chance estimates outperforms \ga and \nsga for all $\delta$ values. Furthermore, \gsemo statistically outperforms \ga for most of the settings. For the other settings there is no statistical significant difference in terms of the results for \gsemo and \ga, i.e., for the graph frb35-17-01 for budget $C$ = $20$, $\alpha$ = $0.1$ and $\delta$ = $0.5, 1.0$. NSGA-II is outperforming \ga for most of the examined settings and the majority of the results are statistically significant. However, \nsga performs worse than \ga for frb35-17-01 when $C=20$, $\alpha=0.1$ and $\delta$ = $0.5, 1.0$.

\section{Experimental Results for Uniformly Distributed Weights with Same Dispersion}
\label{sec7}

We now consider the algorithms for the setting where the expected weight $a(v)$ of a node $v$ is given as $a(v) = 1 +deg(v)$ where $deg(v)$ is the degree of node $v$ in the given graph $G$ (see also Section~\ref{MCP}). 

Table~\ref{tb:MCPoutdegree} shows the results obtained for the maximum coverage problem by \gga and \gsemo based on Chernoff bounds and Chebyshev's inequality for the combinations of $\alpha$ and $\delta$.
The results show that \gsemo outperforms \gga. Overall, the use of the fitness function $g$ when comparing the use of the fitness function $\hat{g}$ for \gsemo achieves better performance. 
This is interesting as it shows that the algorithms are not prevented from making progress when using the tail inequalities for the investigated experimental settings. The slightly better results for $g$ compared to $\hat{g}$ also justify the use of tail inequalities as part of the constraint evaluation.
Furthermore, the statistical tests show that for most combinations of $\alpha$ and $\delta$ \gsemo significantly outperforms \gga.
For example, for the graph frb35-17-01, $\delta = 40$ and $\alpha = 0.1, 0.001$, the fitness values of \gsemo using $g$ are $328.20$ and $323.83$ compared to 302 and 252 for \gga.

The results for \nsga using $g$ and $\hat{g}$ for all combinations of $\alpha$ and $\delta$ values are shown in Table~\ref{tb:MCPoutdegree_SPEA}
Overall, \nsga outperforms \gsemo in almost all settings and nearly all results are statistically significant. The only instances where \nsga 
does not perform better based on statistical tests are when considering results obtained for $\delta=40$, $\alpha =0.1$ and $\delta=30$, $\alpha=0.001$ for the graph frb30-15-01. Looking at the results for the graph frb35-17-01, we can see that there are no significantly differences between these both algorithms when considering settings such as $\delta$ = $20, 40, 50$ and $\alpha$ = $0.001$.
We also investigate the
impact of the use larger population sizes on the performance of \nsga for maximum coverage with degree-based chance constraints on the graphs frb30-15-01 and frb35-17-01 in order to see whether this leads to further improvements. We consider population sizes $\mu =20, 50, 100, 200$, offspring population size $\frac{\mu}{2}$ for $\alpha = 0.001$ and $\delta = 1, 10, 20, 30, 40, 42, 50$ for the graphs. The number of fitness evaluations is $5,000,000$ as in previous settings. 
The results are shown in Figure~\ref{fig:nsga-frb30_size} as the mean fitness values among $30$ independent runs. Our results indicate that as the population size increases, the fitness values decrease for all $\delta$ values on both graphs. This means that larger populations do not improve the performance of \nsga for the investigated instances. 
\begin{figure}[!t]
\centering
\includegraphics[width=0.49\textwidth]{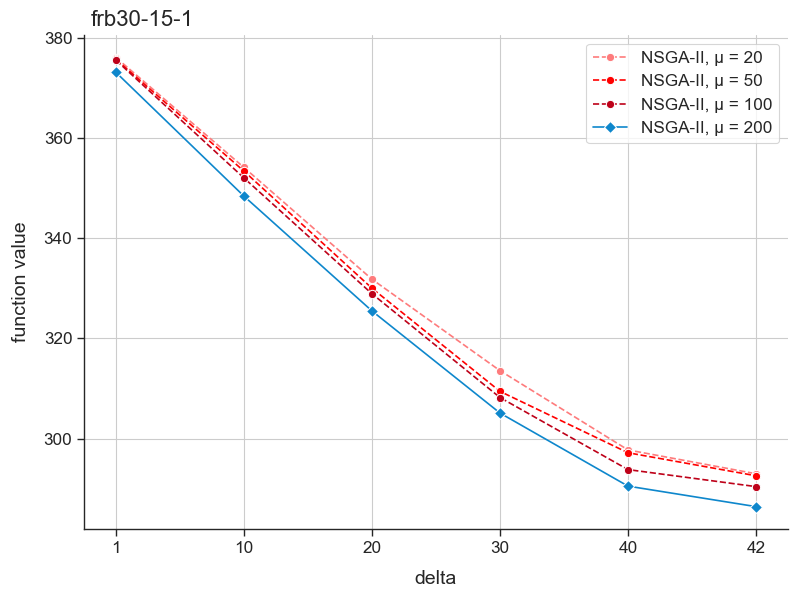}
\includegraphics[width=0.49\textwidth]{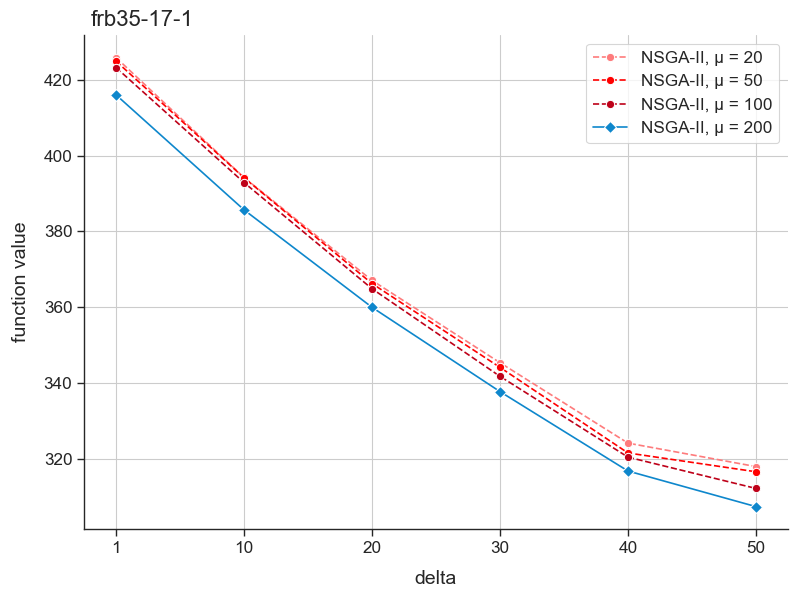}
\caption{Mean values for maximum coverage with degree-based chance constraints for \nsga on graph frb30-15-01 and frb35-17-01 (from left) for parent population size $\mu$ = $20, 50, 100, 200$ and offspring population size $\mu/2$. 
}
\label{fig:nsga-frb30_size}
\end{figure}
Furthermore, we investigate the performance of \spea as an alternative to, and in comparison with \nsga. We run \spea with parent population size $20$, offspring population size $10$, 
uniform crossover probability $0.90$ and standard bit mutation for $500\,000$ generations. 
Table~\ref{tb:MCPoutdegree_SPEA} displays the results obtained by \spea for $g$ and $\hat{g}$ for all combinations of $\alpha$ and $\delta$ values in additional to the ones obtained by \nsga. Overall, when comparing all results that are shown in Table~\ref{tb:MCPoutdegree} and Table~\ref{tb:MCPoutdegree_SPEA}, we observe that \spea obtains the highest mean fitness values in comparison to the results obtained by \gga, \gsemo and \nsga. 
In particular, the solutions obtained by \spea have significantly superior performance compared to \gsemo and \ga, especially when considering the case of a low $\delta$ value, i.e., for $\delta$ = $1, 10$ on both benchmarks. 
Furthermore, we do not observe significant differences when using the fitness functions $g$ and $\hat{g}$ for \spea.
These results indicate the effectiveness of \spea in solving the submodular chance-constrained maximum coverage problem. When comparing the use of the fitness functions $g$ and $\hat{g}$ in \nsga with the use of the fitness functions $g$ and $\hat{g}$ in \spea, we observe that \spea significantly outperforms \nsga in the most of the considered benchmark instances.
Overall, the use of fitness functions $g$ and $\hat{g}$ for \spea achieves the best results across various instances in majority of cases, demonstrating its superior performance.
%

\section{Conclusions}
Chance constraints involve stochastic components and require that a constraint is violated only with a small probability. We carried out a first runtime analysis on the optimization of monotone submodular functions with chance constraints. Our results show that \gsemo using a multi-objective formulation of the problem based on tail inequalities is able to achieve the same approximation guarantee as recently studied greedy approaches. Furthermore, our experimental results for the uniform IID case show that \gsemo computes significantly better solutions than the greedy approach and it is often outperforming NSGA-II. 
In the case of uniformly distributed weights with the same dispersion, our experimental results for degree-based constraints show that the generalized greedy algorithm is outperformed by all evolutionary multi-objective algorithms, and that \spea achieves the highest performance on the majority of the considered benchmark instances. 
For future work, it would be interesting to analyze other probability distributions for chance-constrained monotone submodular functions. A natural next step would be to examine uniformly distributed weights with different dispersion and also consider scenarios that involve correlations between the stochastic weights.
\section*{Acknowledgments}
This work has been supported by the Australian Research Council (ARC) through grants DP160102401 and FT200100536, and by the South Australian Government through the Research Consortium "Unlocking Complex Resources through Lean Processing".

\bibliographystyle{apalike}

\bibliography{references.bib}

\end{document}